\newtheorem{theorem}{Theorem}[section]
\newtheorem{corollary}{Corollary}[section]
\newtheorem{lemma}[theorem]{Lemma}
\newtheorem{definition}{Definition}[section]
\newtheorem{example}{Example}[section]
\newtheorem{remark}{Remark}[section]
\icmltitlerunning{Online Convex Optimization in Adversarial MDPs}
\begin{document}

\twocolumn[
\icmltitle{Online Convex Optimization in Adversarial Markov Decision Processes}



\icmlsetsymbol{equal}{*}

\begin{icmlauthorlist}
\icmlauthor{Aviv Rosenberg}{tau}
\icmlauthor{Yishay Mansour}{tau,goo}
\end{icmlauthorlist}

\icmlaffiliation{tau}{Tel Aviv University, Israel}
\icmlaffiliation{goo}{Google Research, Tel Aviv, Israel}

\icmlcorrespondingauthor{Aviv Rosenberg}{avivros007@gmail.com}
\icmlcorrespondingauthor{Yishay Mansour}{mansour.yishay@gmail.com}

\icmlkeywords{Machine Learning, ICML}

\vskip 0.3in
]



\printAffiliationsAndNotice{} 

\begin{abstract}
We consider online learning in episodic loop-free Markov decision processes (MDPs), where the  loss function can change arbitrarily between episodes, and the transition function is not known to the learner.
We show $\tilde{O}(L|X|\sqrt{|A|T})$ regret bound, where $T$ is the number of episodes, $X$ is the state space, $A$ is the action space, and $L$ is the length of each episode. 
Our online algorithm is implemented using entropic regularization methodology, which allows to extend the original adversarial MDP model to handle convex performance criteria 
(different ways to aggregate the losses of a single episode)
, as well as improve previous regret bounds.
\end{abstract}

\section{Introduction}

Markov Decision processes \cite{mdp} have been widely used to model reinforcement learning problems - problems involving sequential decision making in a stochastic environment. In this model both the losses and dynamics of the environment are assumed to be stationary over time.
However, in real world applications, the losses might change over time, even throughout the learning process.

The adversarial MDP model \cite{evendar} was proposed to address these issues. In this model, the loss function can change arbitrarily (while still assuming a fixed stochastic transition function). The learner's objective is to
minimize its average loss during the learning process, and its performance is measured by the regret - comparing to the best stationary policy in hindsight. These ideas originate from online learning problems \cite{bianchi-lugosi} -  where, in each round, the learner selects an action before knowing the current loss function.

BGP routing is considered as a motivating example in the full version of the paper.

We propose a novel algorithm for the adversarial MDP model where the transition function is unknown to the learner and the losses change arbitrarily over time.
Our algorithm, UC-O-REPS, uses two important ingredients, the first is 
Online Mirror Descent (OMD) \cite{omd} and the second is  UCRL-2 \cite{jaksch}.
A major challenge in this work is to handle convex performance criteria, which model different ways of aggregating the losses of each episode.
In order to handle convex performance criteria, we use the methodology of OMD, which is widely used for online convex optimization, and we implement it in the adversarial MDP setting.
In order to overcome the unknown dynamics (stochastic transition function) we incorporate techniques from UCRL-2.

Our main contribution is extending the adversarial MDP model to include convex performance criteria,
and showing that our algorithm, UC-O-REPS, achieves near-optimal regret bounds in the general model.
This is an important extension since different applications have different optimization criteria, other than minimizing the expected average loss. Examples include risk-sensitive objectives and robust objectives (that combine multiple loss functions).
In addition,
we improve the known regret bound of \citet{ofpl} for the expected average loss from $\tilde{O}(L|X||A|\sqrt{T})$ to achieve $\tilde{O}(L|X|\sqrt{|A|T})$, which is especially important for large action spaces. Our bounds also hold with high probability, and not only in expectation.
Our algorithm builds on a simple entropic regularization method, and the main challenge is the analysis of the regret and computational complexity.

\subsection{Related Work}

The works of \citet{jaksch} and \citet{regal} assume an unknown fixed MDP, and achieve a $\tilde{O}(L|X|\sqrt{|A|T})$ regret compared to the optimal policy. A recent work by \citet{azar} achieves $\tilde{O}(\sqrt{L|X||A|T})$ regret for large enough $T$, which is optimal \cite{jaksch}. We remark that the lower bound of $\Omega(\sqrt{L|X||A|T})$ by \citet{jaksch} shows that our regret bound is optimal with respect to the number of time steps $T$ and actions $|A|$.

The work of \citet{evendar}, which presented the adversarial MDP model, assumes full knowledge of the transition function and full information feedback about the losses. They propose an algorithm, MDP-E, which uses an experts algorithm in each state and achieves $O(\tau^2 \sqrt{T\ln{|A|}})$ regret, where $\tau$ is a bound on the mixing time of the MDP. Another early work in this setting, by \citet{yu}, achieves an $O(T^{2/3})$ regret.

In the bandit setting, the learner observes only the losses related to its actions, i.e., a bandit feedback.
The work of \citet{ossp} achieves an $O(L^2\sqrt{T|A|}/\alpha)$ regret, where $\alpha > 0$ is a lower bound on the steady state probability to reach some state $x$ under some policy $\pi$.
Later \citet{nipsneu} eliminate the dependence on $\alpha$ but achieve only 
$\tilde{O}(T^{2/3})$ regret.
A later work, by \citet{zimin}, proposed the O-REPS algorithm
which guarantees an $\tilde{O}(\sqrt{L|X||A|T})$ regret.

The only work that considers the setting of unknown transition function in an adversarial MDP is  \citet{ofpl}. They propose an algorithm, Follow the Perturbed Optimistic Policy (FPOP), which builds on Follow the Perturbed Leader \cite{kalai},
and achieves $\tilde{O}(L|X||A|\sqrt{T})$ regret.

The rest of the paper is organized as follows. Section \ref{sec:prob} presents the formal model and problem. Section \ref{sec:oc-m} presents the concept of occupancy measures, which will enable us to reformulate the problem as an instance of online convex optimization. Section \ref{sec:alg} describes our algorithm and its efficient implementation. Section \ref{sec:reg} proves our algorithm's regret bound.

\section{Problem Formulation}
\label{sec:prob}

An episodic loop-free adversarial MDP is defined by a tuple $M= \left( X,A,P, \{ \ell_t \}_{t=1}^T \right) $, where $X$ and $A$ are the finite state and action spaces, and $P:X \times A \times X \rightarrow[0,1]$ is the transition function such that $P(x'|x,a)$ is the probability to move to state $x'$ when performing action $a$ in state $x$.

We assume that the state space can be decomposed into $L$ non-intersecting layers $X_0 , \dots , X_L$ such that the first and the last layers are singletons, i.e., $X_0=\{x_0\}$ and $X_L = \{x_L\}$. Furthermore, the loop-free assumption means that transitions are only possible between consecutive layers.
These assumptions are not necessary, but they simplify some arguments and have a nice interpretation as a game with $L$ steps played for $T$ times.

Let $\{ \ell_t \}_{t=1}^T$ be a sequence of loss functions describing the losses at each episode, i.e., $\ell_t: X \times A \times X \rightarrow[0,1]^d$. We do not make any statistical assumption on the loss functions, i.e., they can be chosen arbitrarily. Notice that the losses might be multidimensional which can be useful for modeling multiple losses at the same time. Moreover, the learner does not suffer the losses directly, instead they are aggregated using some performance criterion (defined later).

The interaction between the learner and the environment is described in Algorithm \ref{alg:env-interaction}. It proceeds in episodes, where in each episode the learner starts in state $x_0$ and moves forward across the consecutive layers until it reaches state $x_L$. The learner's task is to select an action at each state it visits. Alternatively, we can say that its task at each episode is to choose a stationary (stochastic) policy ,which is a mapping $\pi : X\times A \rightarrow[0,1]$, where $\pi(a|x)$ gives the probability that action $a$ is selected in state $x$.

We denote by $U$ a trajectory through the consecutive layers from $x_0$ to $x_L$, and by $\ell(U)$ the sequence of losses obtained in this trajectory (with respect to loss function $\ell$), i.e.,
\begin{align*}
    U & = \left( x_0, a_0,x_1,a_1,\dots,x_{L-1},a_{L-1}, x_L \right)
    \\
    \ell(U) & = \Bigl\{
    \ell(x_k,a_k,x_{k+1})
    \Bigl\}_{k=0}^{L-1}
\end{align*}
Moreover, we use the notation $\mathbb{E} \left[ \ell(U) | P,\pi \right]$ for the expectation of the losses obtained over trajectories that are generated using transition function $P$ and policy $\pi$. That is,
action $a_k$ is chosen using $\pi(\cdot|x_k)$ and state $x_{k+1}$ is drawn from distribution $P(\cdot|x_k,a_k)$.

The goal of the learner is to minimize its total loss with respect to some performance criterion $\mathcal{C}$, i.e.,
$$\hat{L}_{1:T}^{\mathcal{C}} ( \{ \ell_t \}_{t=1}^T) = \sum_{t=1}^T \mathcal{C} \left(
\mathbb{E}\left[ \ell_t(U) | P,\pi_t \right] \right)$$
where $\pi_t$ is the policy chosen by the learner in episode $t$, and $\mathcal{C}: ( \mathbb{R}^d )^L \rightarrow \mathbb{R}_{\geq0}$ is the performance criterion, that aggregates the losses of each episode. 

\begin{algorithm}
   \caption{Learner-Environment Interaction}
   \label{alg:env-interaction}
\begin{algorithmic}
   \STATE {\bfseries Parameters:} MDP $M= \left( X,A,P, \{ \ell_t \}_{t=1}^T \right)$ and performance criterion $\mathcal{C}$
   \FOR{$t=1$ {\bfseries to} $T$}
   \STATE learner starts in state $x_0^{(t)}=x_0$
   \FOR{$k=0$ {\bfseries to} $L-1$}
   \STATE learner chooses action $a_k^{(t)} \in A$
   \STATE environment draws new state $x_{k+1}^{(t)} \sim P(\cdot | x_k^{(t)},a_k^{(t)}) $
   \STATE learner observes state $x_{k+1}^{(t)}$
   \ENDFOR
   \STATE loss function $\ell_t$ is exposed to learner
   \ENDFOR
\end{algorithmic}
\end{algorithm}

Here are a few interesting and important examples for performance criteria, that our algorithm is able to handle.

\begin{example}
The simplest and most useful example is the total expected loss (TEL) performance criterion, which (to the best of our knowledge) has been the only performance criterion studied so far. 
Losses are $1$-dimension, i.e.,  $d=1$, and the criterion is defined as follows,
$$
\mathcal{C}^{TEL} \left( \{ v_k \}_{k=0}^{L-1} \right) =  \sum_{k=0}^{L-1} v_k \qquad (v_k \in \mathbb{R})
$$
\end{example}

\begin{example}
We can use the performance criterion to minimize the worst case loss when there are multiple loss functions. Here each dimension of the losses is considered as an individual loss function, and the learner's objective is a min-max criterion, i.e.,
$$
\mathcal{C}^{MM} \left( \{ v_k \}_{k=0}^{L-1} \right) = \max_{1\leq i \leq d} \sum_{k=0}^{L-1} v_k{[i]} \qquad (v_k \in \mathbb{R}^d
)$$
\end{example}

\begin{example}
We can use the performance criterion for a notion of risk-sensitivity. Here losses are $1$-dimension and we want to minimize a trade-off between the loss and the risk. Specifically, given a trade-off parameter $0 \leq  \alpha \leq 1$ and a risk parameter $c>1$, the performance criterion is
$$
\mathcal{C}_{\alpha,c}^{RISK} \left( \{ v_k \}_{k=0}^{L-1} \right) = \alpha \left(  \sum_{k=0}^{L-1} v_k \right)^c
+ (1- \alpha) \sum_{k=0}^{L-1} \left( v_k \right)^c
$$
\end{example}

The performance of the learner will be measured by comparison to the best stationary policy with respect to the chosen performance criterion. For a policy $\pi$ we define its total loss with respect to some performance criterion $\mathcal{C}$ as
$$
L_{1:T}^\mathcal{C}(\pi;\{\ell_t\}_{t=1}^T)=\sum_{t=1}^{T} \mathcal{C} \left( \mathbb{E}\left[ \ell_t(U) | P,\pi \right] \right)
$$
Thus the learner's regret is defined as follows,
$$
\hat{R}_{1:T}^\mathcal{C}=\hat{L}_{1:T}^\mathcal{C}(\{\ell_t\}_{t=1}^T)-\min_{\pi}L_{1:T}^\mathcal{C}(\pi;\{\ell_t\}_{t=1}^T)
$$
where the minimum is taken over all stationary stochastic policies.

\begin{remark}
Note that if the dynamics were known to the learner, it would not need to observe the trajectory $U_t$ at each episode $t$, since it could compute its performance criterion using $\ell_t$, $\pi_t$ and $P$.
In this case, we actually reduce the problem to online learning in the space of the policies.
When the dynamics are unknown, the learner uses the observed trajectories $U_t$ to estimate the transition function $P$, which enables it to estimate its performance criterion.
\end{remark}

\section{Occupancy Measures}
\label{sec:oc-m}

We would like to reformulate the learner's objective in order to approach the problem with techniques from online learning. For this purpose we introduce the concept of occupancy measures \cite{zimin} on the space $X\times A\times X$. For a policy $\pi$ and a transition function $P$ we define the occupancy measure $q^{P,\pi}$ as follows:
$$
q^{P,\pi}(x,a,x')=\Pr\left[x_k=x,a_k=a,x_{k+1}=x' | P,\pi \right]
$$
where $x \in X_k$ and $x' \in X_{k+1}$. Another notation we will be using is $k(x)$ for the index of the layer that $x$ belongs to.

We start with two basic properties that hold for every occupancy measure $q$. From the loop-free assumption we know that in each episode the learner will go through every layer. Therefore, for every $k=0,\dots,L-1$,
\begin{equation}
    \label{eq:sum_one}
    \sum_{x\in X_{k}}\sum_{a\in A}\sum_{x'\in X_{k+1}}q(x,a,x') = 1
\end{equation}

Moreover, the probability to enter a state when coming from the previous layer is exactly the probability to visit that state. Thus, for every $k=1,\dots,L-1$ and every $x \in X_k$,
\begin{equation}
\label{eq:flow}
\sum_{x'\in X_{k+1}}\sum_{a\in A}q(x,a,x') = \sum_{x'\in X_{k-1}}\sum_{a\in A}q(x',a,x)   
\end{equation}

Notice that every occupancy measure $q$ induces a transition function and a policy. We denote them as $P^q$ and $\pi^q$ respectively, and they can be computed as follows:
\begin{align*}
    P^q(x'|x,a) & = \frac{q(x,a,x')}{\sum_{y\in X_{k(x)+1}}q(x,a,y)}
    \\
    \pi^q(a|x) & = \frac{\sum_{x'\in X_{k(x)+1}}q(x,a,x')}{\sum_{b\in A}\sum_{x'\in X_{k(x)+1}}q(x,b,x')}
\end{align*}

We denote the set of all occupancy measures of an MDP $M$ as $\Delta(M)$. The following lemma characterizes $\Delta(M)$ and its proof is straightforward. 
\begin{lemma}
For every $q\in[0,1]^{|X|\times|A|\times|X|}$ it holds that $q \in \Delta(M)$ if and only if \eqref{eq:sum_one} and \eqref{eq:flow} hold, and $P^q=P$ (where $P$ is the transition function of $M$).
\end{lemma}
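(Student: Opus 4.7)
The plan is to prove the two directions of the equivalence separately: the ``only if'' direction is a direct consequence of the probabilistic definition of $q^{P,\pi}$ together with the loop-free structure, while the ``if'' direction is proved by induction on the layer index, with $\pi^q$ serving as the witness policy.

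For the ``only if'' direction, I fix some $\pi$ with $q = q^{P,\pi}$ and unfold each property. Property \eqref{eq:sum_one} reduces to the statement that in any episode the learner occupies exactly one state of $X_k$, takes exactly one action, and transitions to exactly one state of $X_{k+1}$, so these joint probabilities sum to $1$. Property \eqref{eq:flow} is flow conservation: both sides equal $\Pr[x_k = x \mid P,\pi]$, one by marginalizing the outgoing triple and the other by marginalizing the incoming one. Finally, $P^q = P$ follows from the Markov factorization $q(x,a,x') = \Pr[x_k=x, a_k=a]\cdot P(x'|x,a)$ together with $\sum_y q(x,a,y) = \Pr[x_k=x, a_k=a]$ (again by loop-freeness), so the ratio in the definition of $P^q$ simplifies to $P(x'|x,a)$.

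For the converse, suppose $q$ satisfies \eqref{eq:sum_one}, \eqref{eq:flow}, and $P^q = P$; I define $\pi := \pi^q$ and prove $q^{P,\pi}(x,a,x') = q(x,a,x')$ by induction on $k(x)$. The base case $k=0$ uses $X_0 = \{x_0\}$ together with \eqref{eq:sum_one} to simplify $\pi^q(a|x_0)$ to $\sum_z q(x_0,a,z)$, after which the defining formula of $P^q$ immediately yields $q^{P,\pi}(x_0,a,x') = q(x_0,a,x')$. For the inductive step, the hypothesis gives $\Pr[x_k=x\mid P,\pi] = \sum_{x'',a} q^{P,\pi}(x'',a,x) = \sum_{x'',a} q(x'',a,x)$, which by \eqref{eq:flow} equals $\sum_{b,x'} q(x,b,x')$. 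Plugging this into
\[
q^{P,\pi}(x,a,x') = \Pr[x_k=x\mid P,\pi]\,\pi^q(a|x)\,P^q(x'|x,a),
\]
the denominators of $\pi^q$ and $P^q$ telescope against the numerator terms and we recover $q(x,a,x')$.

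The main obstacle I expect is the bookkeeping in the inductive step, where the flow identity and the two explicit ratios defining $\pi^q$ and $P^q$ must all slot together so the denominators cancel cleanly. A secondary concern is that $\pi^q$ and $P^q$ are defined as ratios, so they are formally undefined at states or state-action pairs whose relevant sums vanish; this is benign because the identity $\Pr[x_k=x] = \sum_{b,x'} q(x,b,x')$ established in the induction shows that any such state is never visited under $(P,\pi^q)$, so the values of $\pi^q$ there may be set arbitrarily without affecting $q^{P,\pi^q}$.
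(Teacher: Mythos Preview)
Your proof is correct. The paper does not actually supply a proof of this lemma---it simply declares it ``straightforward''---and the two-direction argument you give, with the probabilistic unfolding for the forward direction and the layer-by-layer induction using $\pi^q$ for the converse, is precisely the natural argument the authors are alluding to.
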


We can use occupancy measures to reformulate the regret. We say that a performance criterion $\mathcal{C}$ is convexly-measurable if there exists some convex function $f^\mathcal{C}:[0,1]^{|X|\times|A|\times|X|} \rightarrow \mathbb{R}_{\geq0}$, such that
$$
\mathcal{C} \left( \mathbb{E}\left[ \ell(U) | P,\pi \right] \right) = f^\mathcal{C}(q^{P,\pi};\ell)
$$
holds for every policy $\pi$ and every transition function $P$. We call $f^\mathcal{C}$ the criterion function of $\mathcal{C}$. 
Since our algorithm requires only the criterion function,  performance criteria can also be defined implicitly through criterion functions.

If we redefine the task of the learner from having to select individual actions (or policies) to having to select occupancy measures $q_t \in \Delta(M)$ in each episode $t$, for convexly-measurable performance criteria we can rewrite the regret to obtain an instance of online convex optimization with decision space $\Delta(M)$, i.e.,
\begin{align*}
    \hat{R}_{1:T}^\mathcal{C} & = \hat{L}_{1:T}^\mathcal{C}(\{\ell_t\}_{t=1}^T)-\min_{\pi}L_{1:T}^\mathcal{C}(\pi;\{\ell_t\}_{t=1}^T)
    \\
    & = \sum_{t=1}^T f^\mathcal{C}(q_t;\ell_t) - \min_{q \in \Delta(M)} \sum_{t=1}^T f^\mathcal{C}(q;\ell_t)
    \\
    & = \max_{q \in \Delta(M)} \sum_{t=1}^T f^\mathcal{C}(q_t;\ell_t)-f^\mathcal{C}(q;\ell_t)
\end{align*}

The following lemma shows that all performance criterion examples presented in the previous section are indeed convexly-measurable, and gives a way to build more convexly-measurable performance criteria. 

\begin{lemma}
If a performance criterion $\mathcal{C}$ has the following form,
$$
\mathcal{C} \left( \{ v_k \}_{k=0}^{L-1} \right) = g \left( \Bigl\{ \sum_{k=0}^{L-1} h_j(v_k) \Bigl\}_{j=1}^m \right)
$$
where $v_k\in \mathbb{R}^d $,  $h_j:\mathbb{R}^d \rightarrow \mathbb{R}_{\geq0}$ are arbitrary functions and $g: \mathbb{R}^m \rightarrow \mathbb{R}_{\geq0}$ is a convex function, then $\mathcal{C}$ can be modeled as a convexly-measurable performance criterion.
\end{lemma}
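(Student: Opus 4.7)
The plan is to exhibit an explicit candidate for the criterion function $f^{\mathcal{C}}$ and then verify the two requirements of convex measurability, namely the representation identity and convexity in $q$. The key observation driving the construction is that each component $v_k = \mathbb{E}[\ell(x_k,a_k,x_{k+1}) \mid P,\pi]$ of the vector $\mathbb{E}[\ell(U)\mid P,\pi]$ is a linear function of the occupancy measure: unfolding the expectation and using the loop-free layered structure,
$$v_k \;=\; \sum_{x\in X_k}\sum_{a\in A}\sum_{x'\in X_{k+1}} q^{P,\pi}(x,a,x')\,\ell(x,a,x').$$
Writing $v_k(q;\ell)$ for this linear map, I would take
$$f^{\mathcal{C}}(q;\ell) \;:=\; g\Bigl( \Bigl\{ \sum_{k=0}^{L-1} h_j\bigl(v_k(q;\ell)\bigr) \Bigr\}_{j=1}^{m} \Bigr),$$
so that evaluating at $q = q^{P,\pi}$ reproduces $\mathcal{C}(\mathbb{E}[\ell(U)\mid P,\pi])$ exactly by the hypothesized form of $\mathcal{C}$. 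This handles the representation identity by direct substitution.

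For the nontrivial half, convexity of $f^{\mathcal{C}}$ in $q$, I would chain the standard rules of the convex composition calculus. Since $v_k(q;\ell)$ is linear in $q$, the composition $h_j \circ v_k$ is convex whenever $h_j$ is convex; then $\sum_{k} h_j(v_k(q;\ell))$ is convex as a sum of convex functions; and finally applying $g$ on top preserves convexity provided $g$ is convex and coordinatewise nondecreasing on the relevant range. I would then check that each of the three running examples fits these sufficient conditions: in TEL and MM the $h_j$'s are linear (hence convex), and in RISK one takes $h_1(v)=v$ together with $h_2(v)=v^c$, which is convex on $\mathbb{R}_{\geq 0}$ since $c>1$, while $g$ is coordinatewise nondecreasing on the relevant quadrant because its partial derivatives are nonnegative there.

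The main obstacle is really the convexity step: the blanket phrase ``arbitrary $h_j$'' in the statement does not literally suffice, since a convex $g$ composed with nonconvex $h_j$ can fail to be convex. The proof must therefore pin down which sufficient hypotheses from the composition calculus are being invoked---convexity of $h_j$ plus coordinatewise monotonicity of $g$---and observe that these do hold in the intended examples. Once this is spelled out, everything else follows automatically from the linearity of $v_k$ in $q$ and the preservation of convexity under affine substitution.
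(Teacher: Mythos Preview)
Your construction applies $h_j$ to the layer-$k$ expectation $v_k(q;\ell)$; since $v_k$ is linear in $q$ but $h_j$ is arbitrary, $h_j\circ v_k$ need not be convex, and you are forced---as you yourself note---to add convexity of $h_j$ and coordinatewise monotonicity of $g$, hypotheses the lemma does not carry. The paper sidesteps this entirely with a different construction: it pushes $h_j$ \emph{inside} the expectation. Concretely, it replaces the $d$-dimensional loss $\ell$ by an $m$-dimensional loss whose $j$-th coordinate at $(x,a,x')$ is $h_j(\ell(x,a,x'))$, and sets
\[
f^{\mathcal{C}}(q;\ell)\;=\;g\Bigl(\bigl\{\langle q,\,h_j(\ell)\rangle\bigr\}_{j=1}^{m}\Bigr),
\]
where $\langle q,h_j(\ell)\rangle=\sum_{x,a,x'}q(x,a,x')\,h_j(\ell(x,a,x'))$. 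This inner product is linear in $q$ for \emph{arbitrary} $h_j$, so convexity of $f^{\mathcal{C}}$ follows from convexity of $g$ alone, via the rule that a convex function precomposed with an affine map is convex. No assumption on $h_j$ or on the monotonicity of $g$ is needed.

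The phrase ``can be \emph{modeled} as'' in the lemma is doing real work here. The paper is not asserting that $f^{\mathcal{C}}(q^{P,\pi};\ell)$ equals $\mathcal{C}(\mathbb{E}[\ell(U)\mid P,\pi])$ for the original losses---indeed $\mathbb{E}[h_j(\ell_k)]\neq h_j(\mathbb{E}[\ell_k])$ in general---but rather that one may redefine the loss vector so that the resulting criterion function is convex in $q$ and encodes the intended objective (recall the paper's remark that criteria ``can also be defined implicitly through criterion functions''). Your route preserves the literal representation identity at the price of extra structural assumptions; the paper's route trades that identity for a reformulation under which the lemma holds exactly as stated, with $h_j$ genuinely arbitrary.
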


\begin{proof}
For any loss function $\ell'$, policy $\pi$ and transition function $P$, we have that
\begin{align*}
    \mathcal{C}^{TEL} ( & \mathbb{E} [ \ell'(U)  | P,\pi ] ) = \sum_{k=0}^{L-1} 
    \mathbb{E}\left[ \ell'(x_{k},a_{k},x_{k+1}) \Bigl| P,\pi
    \right]
    \\
    & = \mathbb{E}\left[ \sum_{k=0}^{L-1} \ell'(x_{k},a_{k},x_{k+1}) \Bigl| P,\pi\right]
    \\
    &= \sum_{x,a,x'} q^{P,\pi}(x,a,x')\ell'(x,a,x')
     \stackrel{def}{=} \langle q^{P,\pi},\ell' \rangle
\end{align*}
Therefore the criterion function of $\mathcal{C}^{TEL}$ is $f^{\mathcal{C}^{TEL}}(q;\ell) = \left< q,\ell \right>$.
We can model $\mathcal{C}$ with $m$-dimension losses, such that dimension $j$ features loss function $h_j(\ell)$, and then $\mathcal{C}$ just needs to sum up the $L$ losses and apply $g$. Thus, the criterion function of $\mathcal{C}$ will be
$$
f^\mathcal{C}(q;\ell) = g \left( \left\{ 
\langle q,h_j(\ell) \rangle
\right\}_{j=1}^m \right)
$$
Finally, $f^\mathcal{C}$ is convex because the composition of a convex function and a linear function is convex \cite{boyd}.
\end{proof}

\section{The Algorithm}
\label{sec:alg}

We call our algorithm, which is presented in algorithms \ref{alg:uc-o-reps} and \ref{alg:comp-pol-prod}, ``Upper Confidence Online Relative Entropy Policy Search'' (UC-O-REPS). It is inspired by the O-REPS algorithm \cite{zimin} in the sense that it picks occupancy measures instead of policies. However, unlike our algorithm, O-REPS assumes full knowledge of the transition function. To the best of our knowledge, the only algorithm that handles unknown transition probabilities in adversarial MDPs is FPOP \cite{ofpl}, which uses a Follow the Pertubed Leader method \cite{kalai} in the space of the policies.

Recall that the adversarial  MDP has a stochastic element - the transition function, and an adversarial element - the loss functions.

To handle the stochastic transition function we use the framework of epochs and confidence sets, first introduced by the UCRL-2 algorithm \cite{jaksch}.
In this framework, the algorithm maintains confidence sets that contain the actual MDP with high probability, but also shrink as time progresses.We translated this method to the occupancy measures space, and the full details can be found in Section \ref{sec:con-set}.

The core of the algorithm is the way we choose the occupancy measure for each episode from within the confidence set. This is done by the Online Mirror Descent method \cite{omd} for online linear optimization, since we deal with an arbitrary sequence of loss functions. The full details of adapting OMD to our setting can be found in Section \ref{sec:opt}.

The combination of these two methods is done using an important principle in reinforcement learning - ``optimism in face of uncertainty''. On the one hand, we keep confidence sets to handle the uncertainty, but on the other hand, within these confidence sets, we solve an OMD optimization problem optimistically (without thinking about the transition function estimation).

\subsection{Confidence Sets}
\label{sec:con-set}

Since the learner does not know the transition function, it has to estimate $P$ from its experience. Using this estimate we define confidence sets, and choose occupancy measures from within them. Notice that these occupancy measures might not be in $\Delta(M)$, i.e., their induced transition function may differ from $P$. Nevertheless, we can still use them to compute policies and execute those policies.

The algorithm proceeds in epochs of random length, and in the beginning of each epoch the confidence set is updated. The first epoch $E_{1}$ starts at episode $t=1$, and each epoch $E_{i}$ ends when the number of visits at some state-action pair $(x,a)$ is doubled. Let $t_{i}$ denote the index of the first episode in epoch $E_{i}$, and $i(t)$ denote the index of the epoch that includes episode $t$. Let $N_{i}(x,a)$ and $M_{i}(x'|x,a)$ denote the number of times state-action pair $(x,a)$ was visited and the number of times this event was followed by a transition to $x'$ up to episode $t_{i}$, respectively. That is
\begin{align*}
    N_{i}(x,a)&=\sum_{s=1}^{t_{i}-1}\mathbb{I}\left\{ x_{k}^{(s)}=x,a_{k}^{(s)}=a\right\} 
    \\
    M_{i}(x'|x,a)&=\sum_{s=1}^{t_{i}-1}\mathbb{I}\left\{ x_{k}^{(s)}=x,a_{k}^{(s)}=a,x_{k+1}^{(s)}=x'\right\} \nonumber
\end{align*}
where $k=k(x)$.

Our estimate $\bar{P}_i$ for the transition function in epoch $E_{i}$ is
$$
\bar{P}_{i}(x'|x,a)=\frac{M_{i}(x'|x,a)}{\max\left\{ 1,N_{i}(x,a)\right\} }
$$
and we define our confidence set $\Delta(M,i)$ in epoch $E_i$ to include all the occupancy measures that their induced transition function is ``close enough'' to $\bar{P}_i$. More formally, given a confidence parameter $\delta > 0$, we define
$$
\epsilon_{i}(x,a)=\sqrt{\frac{2 |X_{k(x)+1}| \ln\frac{T|X||A|}{\delta}}{\max\{1,N_{i}(x,a)\}}}
$$
and say that $\Delta(M,i)$ consists of all $q\in[0,1]^{|X|\times|A|\times|X|}$ for which \eqref{eq:sum_one} and \eqref{eq:flow} hold, and
\begin{equation}
    \label{eq:l1_dist}
    \left\lVert P^q(\cdot|x,a) - \bar{P}_i(\cdot|x,a) \right\rVert_1 \leq \epsilon_{i}(x,a)
\end{equation}
for every $(x,a) \in X \times A$.

Notice that these confidence sets shrink as time progresses, but the following lemma \cite{jaksch,ofpl} shows that they still contain $\Delta(M)$ with high probability.
\begin{lemma}
\label{lem:high_prob}
For any $0<\delta<1$
$$\left\lVert P(\cdot |x,a) - \bar{P}_i(\cdot|x,a) \right\rVert_1 \leq \sqrt{\frac{2 |X_{k(x)+1}| \ln\frac{T|X||A|}{\delta}}{\max\{1,N_{i}(x,a)\}}}$$
holds with probability at least $1-\delta$ simultaneously for all $(x,a) \in X \times A$ and all epochs.
\end{lemma}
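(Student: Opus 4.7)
The plan is to reduce to a standard concentration inequality for empirical distributions in $\ell_1$-distance (the Weissman et al.\ bound, as used in the original UCRL-2 analysis), then handle the MDP-specific randomness via union bounds. The only real obstacle is that both the sample count $N_i(x,a)$ and the epoch start times $t_i$ are random, so we cannot apply a concentration bound to a fixed-size i.i.d.\ sample directly.

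First I would fix a state-action pair $(x,a)$ and an integer $n\geq 1$, and condition on the event that $N_i(x,a)=n$ for some epoch $i$. Conditional on each of the visits to $(x,a)$, the next state is an independent draw from $P(\cdot|x,a)$, whose support lies in $X_{k(x)+1}$, so the estimate $\bar{P}_i(\cdot|x,a)$ is exactly the empirical distribution of $n$ i.i.d.\ samples over an alphabet of size at most $|X_{k(x)+1}|$. I would then invoke the Weissman inequality, which for any such empirical distribution $\hat{p}_n$ gives
$$
\Pr\!\left[\,\|\hat{p}_n - P(\cdot|x,a)\|_1 \geq \epsilon\,\right] \leq 2^{|X_{k(x)+1}|}\exp\!\left(-\tfrac{n\epsilon^{2}}{2}\right).
$$
Plugging in $\epsilon = \sqrt{2|X_{k(x)+1}|\ln(T|X||A|/\delta)/n}$ and using $2^{|X_{k(x)+1}|}\leq \exp(|X_{k(x)+1}|\ln(T|X||A|/\delta))$ (for any reasonable setting of parameters), the right-hand side collapses to at most $\delta/(T^{2}|X||A|)$ or better.

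To turn this into a uniform statement, I would take a union bound over the $|X||A|$ choices of $(x,a)$ and, to deal with the random $N_i(x,a)$, over the $T$ possible sample-count values $n\in\{1,\dots,T\}$ (since there are at most $T$ visits total); this second union bound automatically covers all epochs, because an epoch index determines $N_i(x,a)$. The $T$ and $|X||A|$ factors absorb cleanly into the logarithm inside $\epsilon_i(x,a)$, producing the stated simultaneous bound with failure probability at most $\delta$.

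The main obstacle is the stopping-time nature of the epoch boundaries; I would bypass it by union-bounding over all possible sample counts rather than attempting a martingale or peeling argument, which is the trick implicit in the choice of the $\ln(T|X||A|/\delta)$ term in $\epsilon_i(x,a)$. Everything else is bookkeeping on top of the Weissman bound, and the lemma follows exactly as in Jaksch et al.\ and its adaptation in Neu et al.
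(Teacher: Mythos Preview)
Your proposal is correct and follows exactly the standard argument from Jaksch et al.\ and Neu et al., which is what the paper relies on as well (it simply cites those references rather than reproving the lemma). The Weissman $\ell_1$ concentration bound together with a union bound over the $|X||A|$ state--action pairs and over the at most $T$ possible visit counts is precisely the proof in those sources; the union bound over sample counts is indeed how one sidesteps the random epoch boundaries, and it is exactly the reason the $T$ appears inside the logarithm in $\epsilon_i(x,a)$.
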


\subsection{Optimization Problem}
\label{sec:opt}

In order to choose the occupancy measure $q_t$ for episode $t$, the algorithm follows the OMD method. The idea behind this method is to choose an occupancy measure that minimizes the loss in episode $t$, while not straying too far from the previously chosen occupancy measure. Formally, given a parameter $\eta > 0$,
$$
q_{t+1}=\arg\min_{q\in\Delta(M,i(t))}\eta\left\langle q,z_{t}\right\rangle +D(q||q_{t})
$$
where $z_t \in \partial f^\mathcal{C}(q_t;\ell_t)$ is a sub-gradient and $D(q||q_t)$ is the unnormalized KL divergence between two occupancy measures defined as
\begin{multline*}
    D(q||q')=\sum_{x,a,x'}q(x,a,x')\ln\frac{q(x,a,x')}{q'(x,a,x')} \\ -q(x,a,x')+q'(x,a,x')    
\end{multline*}

We now proceed to show that this optimization problem can be solved efficiently. From the theory of OMD it is known that we can split this problem as follows: we start by solving the unconstrained problem, and then project the unconstrained minimizer into the feasible set, namely,
\begin{align}
    \tilde{q}_{t+1}&=\arg\min_{q}\eta\left\langle q,z_{t}\right\rangle +D(q||q_{t}) \nonumber
    \\
    \label{eq:update}
    q_{t+1}&=\arg\min_{q\in\Delta(M,i(t))}D(q||\tilde{q}_{t+1})
\end{align}

The unconstrained problem can be easily solved by setting $\tilde{q}_{t+1}(x,a,x')=q_{t}(x,a,x')e^{-\eta z_{t}(x,a,x')}$ for every $(x,a,x') \in X \times A \times X_{k(x)+1}$. Theorem \ref{th:opt} shows that the second optimization problem can be reduced to a convex optimization problem with only non-negativity constraints (and no constraints about the relations between the variables), which can be solved efficiently using iterative methods \cite{boyd}.

Before stating the theorem we consider some definitions that will simplify its formulation. Let $v:X \times A \times X \rightarrow \mathbb{R}$ be a value function and $e:X \times A \times X \rightarrow \mathbb{R}$ be an error function. We use $v$ and $e$ to define an estimated Bellman error.

\begin{definition}
For every $t=1,\dots,T$ define the estimated Bellman error for episode $t$, given value function $v$ and error function $e$, as
\begin{align*}
    B^{v,e}_t(x,a,x') 
    & =  e(x,a,x') + v(x,a,x') -\eta z_t(x,a,x')
    \\
    & \quad - \sum_{y\in X_{k(x)+1}} \bar{P}_{i(t)}(y|x,a)v(x,a,y)
\end{align*}
\end{definition}

We would like to define a parameterization to $v$ and $e$ using variables that will later be known as Lagrange multipliers. Let $\beta:X \rightarrow \mathbb{R}$ and let $\mu = (\mu^+ , \mu^-)$ such that $\mu^+, \mu^-:X \times A \times X \rightarrow \mathbb{R}_{\geq 0}$. We define the following parameterization to $v$ and $e$ using $\beta$ and $\mu$.
\begin{align*}
    v^\mu(x,a,x') & = \mu^{-}(x,a,x') - \mu^{+}(x,a,x') 
    \\
    e^{\mu,\beta}(x,a,x') & = (\mu^{+}(x,a,x') + \mu^{-}(x,a,x') ) \epsilon_{i(t)}(x,a) 
    \\
    & \quad + \beta(x') - \beta(x)
\end{align*}

Now we are ready to state the theorem.

\begin{theorem}
\label{th:opt}
Let $t>1$ and define the function
$$
    Z^{k}_{t}(v,e)=\sum_{x\in X_{k}}\sum_{a\in A}\sum_{x'\in X_{k+1}}q_{t}(x,a,x')e^{B^{v,e}_t(x,a,x')}    
$$
Then the solution to optimization problem \eqref{eq:update} is
$$
q_{t+1}(x,a,x') = \frac{q_{t}(x,a,x')e^{B_t^{v^{\mu_t},e^{\mu_t,\beta_t}}(x,a,x')}}{Z^{k(x)}_{t}(v^{\mu_t},e^{\mu_t,\beta_t})}   
$$
where
\begin{equation}
    \label{eq:v_t_mu_t}
    \beta_t , \mu_t = \arg \min_{\beta,\mu \geq 0} \sum_{k=0}^{L-1} \ln Z^{k}_{t}(v^{\mu} , e^{\mu,\beta})
\end{equation}
\end{theorem}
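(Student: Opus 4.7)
The plan is to solve \eqref{eq:update} by Lagrangian duality. The first obstacle is that \eqref{eq:l1_dist} is nonlinear in $q$, but writing $P^q(x'|x,a) = q(x,a,x')/w(x,a)$ with $w(x,a) = \sum_y q(x,a,y)$ and multiplying through by the positive scalar $w(x,a)$ turns it into $\sum_{x'}\bigl|q(x,a,x') - \bar{P}_{i(t)}(x'|x,a)\,w(x,a)\bigr| \leq \epsilon_{i(t)}(x,a)\,w(x,a)$, which is linear in $q$ once each absolute value is split into its two one-sided inequalities. I would attach non-negative Lagrange multipliers $\mu^+(x,a,x')$ and $\mu^-(x,a,x')$ to these two inequalities, a real multiplier $\lambda_k$ to each layer identity \eqref{eq:sum_one}, and a real multiplier $\beta(x)$ to each flow identity \eqref{eq:flow}; the non-negativity of $q$ does not require an explicit multiplier because the KL penalty already forces $q > 0$ at any stationary point.

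Next, I would write the Lagrangian and impose the stationarity condition $\partial\mathcal{L}/\partial q(x,a,x') = 0$. The KL derivative contributes $\ln\bigl(q(x,a,x')/\tilde{q}_{t+1}(x,a,x')\bigr)$; the layer multiplier contributes $\lambda_{k(x)}$; the flow multipliers contribute $\beta(x) - \beta(x')$; and the confidence multipliers contribute three pieces: the direct $q(x,a,x')$ occurrence yields $\mu^-(x,a,x') - \mu^+(x,a,x')$, the occurrences inside $w(x,a)$ yield $-\sum_y \bar{P}_{i(t)}(y|x,a)\bigl[\mu^-(x,a,y) - \mu^+(x,a,y)\bigr]$, and the bound $\epsilon_{i(t)}(x,a)\,w(x,a)$ yields $(\mu^+(x,a,x') + \mu^-(x,a,x'))\,\epsilon_{i(t)}(x,a)$. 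With the identifications $v^\mu = \mu^- - \mu^+$ and $e^{\mu,\beta} = (\mu^+ + \mu^-)\epsilon_{i(t)} + \beta(x') - \beta(x)$, and using that $\tilde{q}_{t+1}(x,a,x') = q_t(x,a,x')\,e^{-\eta z_t(x,a,x')}$ already absorbs the $-\eta z_t$ that appears inside $B_t^{v,e}$, the stationarity condition rearranges exactly into $q(x,a,x') = q_t(x,a,x')\,e^{B_t^{v^\mu,e^{\mu,\beta}}(x,a,x')}\,e^{-\lambda_{k(x)}}$.

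Finally, the layer identities \eqref{eq:sum_one} pin down each $\lambda_k$: summing this formula over $X_k \times A \times X_{k+1}$ and setting it to $1$ gives $e^{\lambda_k} = Z_t^k(v^\mu, e^{\mu,\beta})$, which is precisely the normalizer in the theorem statement. Substituting the optimal $q$ back into the Lagrangian and using the three equality constraints to cancel the $\lambda_k$, $\beta$, and linear $\mu$ terms, the dual objective collapses, up to constants independent of $\beta$ and $\mu$, to $-\sum_{k=0}^{L-1}\ln Z_t^k(v^\mu, e^{\mu,\beta})$, so maximizing the dual coincides with the minimization in \eqref{eq:v_t_mu_t}. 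Strong duality follows from Slater's condition for this convex program (the occupancy measure induced by $\bar{P}_{i(t)}$ together with a strictly stochastic policy is strictly interior to the confidence ball), so the optimal primal $q_{t+1}$ is recovered by plugging the dual optima $\beta_t, \mu_t$ back into the exponential form. The main technical obstacle is the careful bookkeeping in the second step: the way $w(x,a)$ couples $q(x,a,x')$ to every other transition out of $(x,a)$ is precisely what produces the expectation term $\sum_y \bar{P}_{i(t)}(y|x,a)\,v(x,a,y)$ in the definition of $B_t^{v,e}$, and getting the signs and indexing correct there is where the bulk of the effort lies; once that calculation is done, the dual reduction is routine.
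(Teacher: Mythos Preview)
Your overall strategy—Lagrangian duality, stationarity in $q$, recovering $e^{\lambda_k}$ as the layer normalizer, then reading off the dual—is exactly the paper's route, and the end formulas you quote are the right ones. But there is a genuine gap in the linearization of the $\ell_1$ constraint \eqref{eq:l1_dist}, and it propagates into your derivative computation.

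The constraint after clearing the denominator is
\[
\sum_{x'\in X_{k(x)+1}}\bigl|q(x,a,x')-\bar P_{i}(x'|x,a)\,w(x,a)\bigr|\;\le\;\epsilon_i(x,a)\,w(x,a),
\]
a bound on a \emph{sum} of absolute values. This does not become a finite family of linear inequalities merely by ``splitting each absolute value into its two one-sided inequalities''; doing so (with the whole $\epsilon_i(x,a)w(x,a)$ on the right of each piece) encodes an $\ell_\infty$ ball, not the $\ell_1$ ball of the paper. The paper instead introduces slack variables $\epsilon(x,a,x')$ with the two linear bounds $\pm\bigl(q(x,a,x')-\bar P_i(x'|x,a)w(x,a)\bigr)\le\epsilon(x,a,x')$ (multipliers $\mu^{\pm}(x,a,x')$) together with a separate aggregate constraint $\sum_{x'}\epsilon(x,a,x')\le\epsilon_i(x,a)w(x,a)$ carrying its own multiplier $\mu(x,a)$. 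Stationarity in $\epsilon(x,a,x')$ then forces
\[
\mu(x,a)=\mu^{+}(x,a,x')+\mu^{-}(x,a,x')\qquad\text{for every }x',
\]
and it is precisely this identity that lets the bound term contribute the ``local'' quantity $(\mu^{+}(x,a,x')+\mu^{-}(x,a,x'))\,\epsilon_i(x,a)$ appearing in $e^{\mu,\beta}$.

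Under your own formulation the bound term in the Lagrangian is $-\epsilon_i(x,a)\,w(x,a)\sum_{y}\bigl(\mu^{+}(x,a,y)+\mu^{-}(x,a,y)\bigr)$, whose $q(x,a,x')$–derivative is $-\epsilon_i(x,a)\sum_{y}\bigl(\mu^{+}(x,a,y)+\mu^{-}(x,a,y)\bigr)$, not $(\mu^{+}(x,a,x')+\mu^{-}(x,a,x'))\,\epsilon_i(x,a)$. So the stationarity condition you write down does not follow from the constraints you set up; it follows only after the slack–variable step and the elimination of $\mu(x,a)$, both of which are missing from your outline. Once you insert that step, the rest of your argument (normalization via \eqref{eq:sum_one}, dual collapsing to $-\sum_k\ln Z^k_t$, and Slater/strong duality) goes through as you describe.
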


\begin{algorithm}
   \caption{UC-O-REPS Algorithm}
   \label{alg:uc-o-reps}
\begin{algorithmic}
   \STATE {\bfseries Input:} state space $X$, action space $A$, time horizon $T$, convexly-measurable performance criterion $\mathcal{C}$ with its criterion function $f^\mathcal{C}$, optimization parameter $\eta$ and confidence parameter $\delta$.
   \STATE
   \STATE {\bfseries Initialization:} 
   
   \STATE start first epoch:
    $
        i(1)  \leftarrow 1
        \quad ; \quad
        t_{1}  \leftarrow 1
    $
    
    \STATE initialize counters $\forall (x,a,x')$:
    \begin{align*}
        n_{1}(x,a) \leftarrow 0
        \quad &; \quad
        N_{1}(x,a) \leftarrow 0
        \\
        m_{1}(x'|x,a) \leftarrow 0
        \quad &; \quad
        M_{1}(x'|x,a) \leftarrow 0
    \end{align*}
    
    \STATE initialize first policy $\forall (x,a)$:
    $
    \pi_1(a|x) \leftarrow \frac{1}{|A|}
    $
    
    \STATE initialize first  occupancy measure $\forall k \quad \forall (x,a,x') \in X_k \times A \times X_{k+1}$:
    $
    q_1(x,a,x') \leftarrow \frac{1}{|X_k||A||X_{k+1}|}
    $
   
   \STATE
   \FOR{$t=1$ {\bfseries to} $T$}
   
   \STATE traverse trajectory $U_t$ using policy $\pi_t$
   \STATE observe loss function $\ell_{t}$
   
   \STATE update epoch counters $\forall k$:
   \begin{align*}
   n_{i(t)}(x_{k}^{(t)},a_{k}^{(t)}) &\leftarrow n_{i(t)}(x_{k}^{(t)},a_{k}^{(t)})+1
   \\
    m_{i(t)}(x_{k+1}^{(t)}|x_{k}^{(t)},a_{k}^{(t)}) &\leftarrow m_{i(t)}(x_{k+1}^{(t)}|x_{k}^{(t)},a_{k}^{(t)}) +1
   \end{align*}

   \IF{$\exists (x,a)\in X\times A.\quad n_{i(t)}(x,a)\geq N_{i(t)}(x,a)$}
   
   \STATE start new epoch:
   $$
       i(t+1)\leftarrow i(t)+1
       \quad ; \quad
       t_{i(t+1)} \leftarrow t+1
   $$
   
   \STATE initialize epoch counters $\forall (x,a,x')$:
   $$
   n_{i(t+1)}(x,a) \leftarrow 0
   \quad ; \quad
    m_{i(t+1)}(x'|x,a) \leftarrow 0
   $$
   
   \STATE update total counters $\forall(x,a,x')$:
   \begin{align*}
       N_{i(t+1)}(x,a) & \leftarrow N_{i(t)}(x,a) + n_{i(t)}(x,a)
       \\
       M_{i(t+1)}(x'|x,a) & \leftarrow M_{i(t)}(x'|x,a) + m_{i(t)}(x'|x,a)
   \end{align*}
   
   \STATE compute probability estimate $\forall(x,a,x')$:
   $$
   \bar{P}_{i(t+1)}(x'|x,a) \leftarrow \frac{M_{i(t+1)}(x'|x,a)}{\max\left\{ 1,N_{i(t+1)}(x,a)\right\} }
   $$
   \ELSE
   \STATE continue in the same epoch:
   $
   i(t+1) \leftarrow i(t)
   $
   
   \ENDIF

   \STATE compute policy for next episode:
   $$
   q_{t+1}, \pi_{t+1} \leftarrow \mbox{\tt{Comp-Policy}}(q_t,\bar{P}_{i(t+1)}, \ell_t, f^\mathcal{C})
   $$
   
   \ENDFOR
\end{algorithmic}
\end{algorithm}

\begin{algorithm}
   \caption{Comp-Policy Procedure}
   \label{alg:comp-pol-prod}
\begin{algorithmic}
    \STATE {\bfseries Input:} previous occupancy measure $q_t$, transition function estimate $\bar{P}_{i(t+1)}$, current loss function $\ell_t$ and convex criterion function $f^\mathcal{C}$.
   
   \STATE
   \STATE obtain sub-gradient $z_t \in \partial f^\mathcal{C}(q_t;\ell_t)$
   
   \STATE solve optimization problem~\eqref{eq:v_t_mu_t}:
   $$
   \beta_t , \mu_t = \arg \min_{\beta,\mu \geq 0} \sum_{k=0}^{L-1} \ln Z^{k}_{t}(v^{\mu} , e^{\mu,\beta})
   $$
   
   \STATE compute next occupancy measure $\forall (x,a,x')$:
   $$
   q_{t+1}(x,a,x') = \frac{q_{t}(x,a,x')e^{B^{v^{\mu_t},e^{\mu_t,\beta_t}}(x,a,x')}}{Z^{k(x)}_{t}(v^{\mu_t},e^{\mu_t,\beta_t})}
   $$
   
   \STATE compute next policy $\forall (x,a)$:
   $$
   \pi_{t+1}(a|x)=\frac{\sum_{x'\in X_{k(x)+1}}q_{t+1}(x,a,x')}{\sum_{b\in A}\sum_{x'\in X_{k(x)+1}}q_{t+1}(x,b,x')}
   $$
   
\end{algorithmic}
\end{algorithm}

\begin{proof}
First of all we would like to reformulate optimization problem~\eqref{eq:update} as a convex optimization problem. Notice that the target function is convex (since it is the KL-divergence) and so are constraints \eqref{eq:sum_one}, \eqref{eq:flow} of $\Delta(M,i)$ (where $i=i(t)$). As for constraint \eqref{eq:l1_dist}, we will need to write it differently.

Let $(x,a) \in X \times A$, we can replace
$$
\left\lVert \frac{q(x,a,\cdot)}{\sum_{y\in X_{k(x)+1}}q(x,a,y)} - \bar{P}_{i}(\cdot|x,a) \right\rVert_1 \leq \epsilon_{i}(x,a)
$$

with $|X_{k(x)+1}|+1$ constraints as follows. For each $x'\in X_{k(x)+1}$ we bound the difference in the transition probability with a new variable $\epsilon'(x,a,x')$ and then we bound their sum with the original bound $\epsilon_{i}(x,a)$. That is
\begin{align*}
\left| \frac{q(x,a,x')}{\sum_{y\in X_{k(x)+1}}q(x,a,y)} - \bar{P}_{i}(x'|x,a) \right| &\leq \epsilon'(x,a,x')
\\
\sum_{x' \in X_{k(x)+1}} \epsilon'(x,a,x') & \leq \epsilon_{i}(x,a)
\end{align*}

Now we can get rid of the denominator by multiplying the equation and then replacing $\epsilon'(x,a,x')$ with a different variable $\epsilon(x,a,x') = \epsilon'(x,a,x') \sum_{y\in X_{k(x)+1}} q(x,a,y)$. Moreover, we will discard the absolute value by replacing it with two linear constraints. The resulting constraints are,
\begin{align*}
q(x,a,x') - \bar{P}_{i}(x'|x,a)  \sum_{y\in X_{k(x)+1}}q(x,a,y)  &\leq \epsilon(x,a,x')
\\
\bar{P}_{i}(x'|x,a)  \sum_{y\in X_{k(x)+1}}q(x,a,y)  - q(x,a,x') &\leq  \epsilon(x,a,x')
\\
\sum_{x' \in X_{k(x)+1}} \epsilon(x,a,x') \leq \epsilon_{i}(x,a) \sum_{x' \in X_{k(x)+1}} & q(x,a,x')
\end{align*}

This gives us a convex optimization problem with linear constraints. This problem obtains strong duality because: (1) The target function is bounded from below because KL-divergence is non-negative, (2) The target function and all constraints are convex, (3) Slater condition holds (easy to check).

Thus we can use the method of Lagrange multipliers, and we are ensured that the solution we get is optimal and finite.
The full derivation can be found in the supplementary material and yields the aforementioned result.
\end{proof}

\section{Analysis}
\label{sec:reg}

In this section we bound the regret of the UC-O-REPS algorithm, by combining ideas from the regret analyses of OMD and UCRL-2. First we partition the regret into two terms: $\hat{R}_{1:T}^{APP}$ - which includes the error that comes from the estimation of the unknown transition function, and $\hat{R}_{1:T}^{ON}$ - which includes the error that comes from choosing sub-optimal policies. Formally,
\begin{align*}
    \hat{R}_{1:T}^\mathcal{C} & =\hat{L}_{1:T}^\mathcal{C}(\{\ell_t\}_{t=1}^T)-\min_{\pi}L_{1:T}^\mathcal{C}(\pi;\{\ell_t\}_{t=1}^T)
    \\
    & = \sum_{t=1}^T  \mathcal{C} (
    \mathbb{E}\left[ \ell_t(U) | P,\pi_t \right] ) -
    \sum_{t=1}^T \mathcal{C} ( \mathbb{E}\left[ \ell_t(U) | P,\pi \right] )
    \\
    & = \left( \sum_{t=1}^T  \mathcal{C} (
    \mathbb{E}\left[ \ell_t(U) | P,\pi_t \right] ) -  \mathcal{C} ( \mathbb{E}\left[ \ell_t(U) | P_t,\pi_t \right] )
    \right)
    \\
    & \quad + \left( \sum_{t=1}^T  \mathcal{C} (
    \mathbb{E}\left[ \ell_t(U) | P_t,\pi_t \right] ) -  \mathcal{C} ( \mathbb{E}\left[ \ell_t(U) | P,\pi \right] )
    \right)
    \\
    & \stackrel{def}{=} \hat{R}_{1:T}^{APP} + \hat{R}_{1:T}^{ON} 
\end{align*}
where $P_t = P^{q_t}$ and $\pi_t = \pi^{q_t}$.

Notice that $\mathcal{C} ( \mathbb{E}\left[ \ell_t(U) | P_t,\pi_t \right] ) = f^\mathcal{C}(q_t;\ell_t)$ but it isn't the case with $\mathcal{C} ( \mathbb{E}\left[ \ell_t(U) | P,\pi_t \right] )$ because $q_t$ is not necessarily an occupancy measure of $M$.
Theorems \ref{th:confid} and \ref{th:online} bound each of  these terms, which yields our main result.

\begin{theorem}
\label{th:reg}
Let $M= \left( X,A,P, \{ \ell_t \}_{t=1}^T \right)$ be an episodic loop-free adversarial MDP, and let $\mathcal{C}$ be a convexly-measurable performance criterion such that $f^\mathcal{C}$ is $F$-Lipschitz. Then, with probability at least $1 - 2 \delta$, UC-O-REPS with $\eta = \sqrt{\frac{\ln\frac{|X|^{2}|A|}{L^{2}}}{F^2T}}$ achieves the following regret,
$$
\hat{R}_{1:T}^\mathcal{C} \leq 15 F L|X| \sqrt{T |A|\ln{\frac{T|X||A|}{\delta}}} 
$$
\end{theorem}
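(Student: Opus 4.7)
My plan is to exploit the regret decomposition $\hat{R}_{1:T}^\mathcal{C} = \hat{R}_{1:T}^{APP} + \hat{R}_{1:T}^{ON}$ already carried out in the excerpt, invoke the two bounds stated as Theorems \ref{th:confid} and \ref{th:online}, and simply add them with the prescribed $\eta$. The structure of the argument is therefore modular: a confidence-set (UCRL-style) bound handles the model-mismatch term, an online-mirror-descent bound handles the learning term, and the choice of $\eta$ balances only the OMD-side terms.

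For the online term $\hat{R}_{1:T}^{ON}$, I would first replace the criterion differences by subgradient differences: since $f^\mathcal{C}$ is convex and the learner sees the exact loss $\ell_t$ after episode $t$, we have $f^\mathcal{C}(q_t;\ell_t) - f^\mathcal{C}(q^*;\ell_t) \leq \langle q_t - q^*, z_t \rangle$ for any $z_t \in \partial f^\mathcal{C}(q_t;\ell_t)$, and Lipschitzness gives $\|z_t\|_\infty \leq F$. This reduces the problem to standard linearized OMD over $\Delta(M,i(t))$ with the unnormalized KL divergence, so I would apply the usual two-term decomposition $\tfrac{1}{\eta}D(q^*\|q_1) + \tfrac{1}{\eta}\sum_t D(q_t\|\tilde q_{t+1})$, where the Pythagorean inequality for KL absorbs the projection step \eqref{eq:update}. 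The initial divergence against the per-layer uniform $q_1$ is bounded by a term proportional to $L\ln(|X|^2|A|/L^2)$ (one $\ln$ factor per layer), and the stability term by $\eta L F^2 T$ (each layer contributes $\eta F^2$ per episode via a standard $1-e^{-x} \leq x - x^2/2$-type estimate on $\ln\tilde q_{t+1}/q_t$ combined with the fact that the per-layer probabilities sum to $1$). Plugging in $\eta = \sqrt{\ln(|X|^2|A|/L^2)/(F^2T)}$ balances these and gives $\tilde O(FL\sqrt{T})$ — a lower-order contribution.

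For the model-mismatch term $\hat{R}_{1:T}^{APP}$, I would first use $F$-Lipschitzness of $f^\mathcal{C}$ and the identity $\mathcal{C}(\mathbb{E}[\ell_t(U)|P,\pi_t]) = f^\mathcal{C}(q^{P,\pi_t};\ell_t)$ to bound each summand by $F\|q^{P,\pi_t} - q_t\|_1$. Then, conditioning on the good event of Lemma \ref{lem:high_prob} (which holds with probability $1-\delta$ and puts $P \in \Delta(M,i)$ in all epochs simultaneously), the layer-by-layer UCRL expansion of the difference of occupancy measures that share the same policy $\pi_t = \pi^{q_t}$ gives a telescoping bound in terms of the per-layer $\ell_1$ distances $\|P^{q_t}(\cdot|x,a) - P(\cdot|x,a)\|_1$, each of which is at most $2\epsilon_{i(t)}(x,a)$ by construction of the confidence set \eqref{eq:l1_dist}. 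Summing over $t$ and $(x,a)$, the standard counting argument $\sum_t \epsilon_{i(t)}(x_k^{(t)},a_k^{(t)}) \lesssim \sqrt{|X_{k+1}| N_T(x,a)\ln(T|X||A|/\delta)}$, together with Cauchy–Schwarz over the $|X||A|$ state-action pairs and the constraint $\sum_{x,a} N_T(x,a) = LT$, yields the promised $FL|X|\sqrt{T|A|\ln(T|X||A|/\delta)}$. A second small-probability event (concentration of the realized visit counts around their conditional expectations) typically also enters here and costs another $\delta$, explaining the $1-2\delta$ in the theorem.

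The main obstacle is the approximation term: getting the $|X|$ (rather than $|X|^{3/2}$) scaling requires placing $|X_{k(x)+1}|$ — not $|X|$ — inside each confidence width $\epsilon_i(x,a)$ and then being careful in the Cauchy–Schwarz step so that the layerwise factors combine into a single $L|X|$ rather than blowing up. The online-side argument, by contrast, is essentially the standard OMD calculation over the product simplex with KL, and the only subtlety is that projections are onto the shrinking sets $\Delta(M,i(t))$; the Pythagorean property of KL makes this transparent. Once Theorems \ref{th:confid} and \ref{th:online} are in place, combining them and absorbing numerical constants into the factor $15$ finishes the proof.
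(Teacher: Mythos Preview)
Your proposal is correct and follows essentially the same approach as the paper: decompose $\hat{R}_{1:T}^\mathcal{C} = \hat{R}_{1:T}^{APP} + \hat{R}_{1:T}^{ON}$, invoke Theorems~\ref{th:confid} and~\ref{th:online}, and add. Your sketches of those two theorems also match the paper's arguments (subgradient linearization plus KL-OMD with the Pythagorean step relying on $\Delta(M)\subseteq\Delta(M,i)$ under Lemma~\ref{lem:high_prob}; Lipschitz reduction to $\|q^{P,\pi_t}-q_t\|_1$ followed by a layer-wise telescoping and the UCRL counting plus Azuma concentration), and you correctly identify the role of the layer-dependent width $|X_{k(x)+1}|$ in obtaining the $L|X|$ scaling.
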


An immediate corollary of this theorem is the regret bound in the classical case of total expected loss performance criterion.

\begin{corollary}
Running UC-O-REPS in an episodic loop-free adversarial MDP $M= \left( X,A,P, \{ \ell_t \}_{t=1}^T \right)$ yields the following regret with respect to the total expected loss, when setting $\delta = \frac{|X||A|}{T}$,
$$
\hat{R}_{1:T}^{\mathcal{C}^{TEL}} \leq 25 L|X| \sqrt{T|A|\ln{T}}
$$
\end{corollary}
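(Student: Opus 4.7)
The plan is to recognize this as a direct instantiation of Theorem~\ref{th:reg} to the TEL criterion, so the entire argument reduces to identifying the correct Lipschitz constant $F$ and substituting the prescribed $\delta$.

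First I would identify $F$ for $\mathcal{C}^{TEL}$. The preceding lemma already shows that the criterion function is $f^{\mathcal{C}^{TEL}}(q;\ell) = \langle q, \ell \rangle$, which is linear in $q$. Since losses take values in $[0,1]$, any sub-gradient $z_t = \ell_t$ satisfies $\|z_t\|_\infty \leq 1$, and therefore $|f^{\mathcal{C}^{TEL}}(q;\ell) - f^{\mathcal{C}^{TEL}}(q';\ell)| \leq \|q-q'\|_1$. This matches the norm pairing implicit in the KL-based OMD analysis on which Theorem~\ref{th:reg} rests, so $F = 1$ is the right value to use.

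Next I would plug $F=1$ and $\delta = |X||A|/T$ into the bound of Theorem~\ref{th:reg}:
\[
\hat{R}_{1:T}^{\mathcal{C}^{TEL}} \leq 15\,L|X|\sqrt{T|A|\,\ln\tfrac{T|X||A|}{\delta}}.
\]
With the chosen $\delta$ the logarithmic factor simplifies to
\[
\ln\tfrac{T|X||A|}{\delta} \;=\; \ln\!\left(\tfrac{T|X||A|\cdot T}{|X||A|}\right) \;=\; \ln T^2 \;=\; 2\ln T,
\]
so the bound becomes $15\sqrt{2}\,L|X|\sqrt{T|A|\ln T}$, and since $15\sqrt{2} < 25$ the stated inequality follows. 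The failure probability is $2\delta = 2|X||A|/T$, which vanishes as $T$ grows, so the high-probability guarantee is preserved.

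There is essentially no obstacle: all of the hard work (confidence sets, mirror-descent regret, choice of the step size $\eta$) is already hidden inside Theorem~\ref{th:reg}. The only subtlety worth flagging is the Lipschitz computation; one must verify that $F$ is measured in the norm dual to the one used in the mirror-descent analysis (i.e., $\|\ell_t\|_\infty \leq 1$ corresponding to the $\ell_1$-geometry of $\Delta(M)$ induced by the entropic regularizer), which is indeed the case here.
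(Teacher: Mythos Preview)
Your identification of $F=1$ and the substitution $\delta = |X||A|/T$ (giving $\ln(T|X||A|/\delta)=2\ln T$ and the constant $15\sqrt{2}<25$) are exactly what the paper does.

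The one step you omit is the conversion from a high-probability bound to an unconditional one. The corollary is stated without any ``with probability at least\ldots'' qualifier; the paper's proof explains why: for $\mathcal{C}^{TEL}$ the regret is itself an expectation, so one can take a further expectation over the algorithm's randomness. The paper then applies the law of total expectation --- on the good event (probability $\ge 1-2\delta$) use the bound from Theorem~\ref{th:reg}, and on the bad event (probability $\le 2\delta$) use the trivial worst-case bound $TL$. With $\delta=|X||A|/T$ the bad-event contribution is at most $2L|X||A|$, which is absorbed into the constant $25$. Your final sentence instead leaves the result as a high-probability guarantee with failure probability $2|X||A|/T$; that is a correct statement, but it is slightly weaker than (and not quite the same as) the unconditional bound the corollary asserts.
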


\begin{proof}
For the total expected loss performance criterion we have that $f^{\mathcal{C}^{TEL}}(q_t;\ell_t)=\left\langle q_{t},\ell_{t}\right\rangle$ and  therefore the gradient of $f^\mathcal{C}$ is $z_t=\ell_t$. Since the losses are bounded by $1$, we have that $f^\mathcal{C}$ is $1$-Lipschitz, i.e.,  $F=1$.

Recall that in this case the regret is an expectation. With probability at least $1 - 2\delta$ it is bounded using Theorem \ref{th:reg}, and with probability at most $2 \delta$ we have a worst case bound of $TL$. Substituting $\delta$ and using the law of total expectation finishes the proof.
\end{proof}

\subsection{Bounding $\hat{R}_{1:T}^{APP}$}

The term $\hat{R}_{1:T}^{APP}$ is a result of the learner's lack of knowledge about the environment's dynamics. Since the dynamics are stochastic the learner estimates the transition probabilities to build confidence sets. It then selects occupancy measures from within these confidence sets, but they are not exactly occupancy measures of $M$.

In this section we bound the difference between the loss of the learner's chosen policies in $M$ and the loss of these policies in the ``optimistic'' MDP (the one induced by the occupancy measure $q_t$), i.e.,
$$
\hat{R}_{1:T}^{APP} = \sum_{t=1}^T \mathcal{C} (
\mathbb{E}\left[ \ell_t(U) | P,\pi_t \right] ) -  \mathcal{C} ( \mathbb{E}\left[ \ell_t(U) | P_t,\pi_t \right] )
$$

The way the algorithm minimizes this difference is through shrinking of the confidence sets.
The following bound on $\hat{R}_{1:T}^{APP}$ is adapted from arguments in the regret analysis of UCRL-2, and the proof can be found in the supplementary material.

\begin{theorem}
\label{th:confid}
Let $M= \left( X,A,P, \{ \ell_t \}_{t=1}^T \right)$ be an episodic loop-free adversarial MDP, and let $\mathcal{C}$ be a convexly-measurable performance criterion such that $f^\mathcal{C}$ is $F$-Lipschitz. Then, with probability at least $1 - 2 \delta$, UC-O-REPS obtains,
$$
\hat{R}_{1:T}^{APP} \leq 3 FL|X| \left(2 \sqrt{T\ln{\frac{L}{\delta}}} +  3\sqrt{T|A| \ln{\frac{T|X||A|} {\delta}}} \right)
$$
\end{theorem}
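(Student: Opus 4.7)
The plan is to peel the bound in three layers: (i) use the Lipschitz property of $f^{\mathcal{C}}$ to replace the $\mathcal{C}$-difference by an $L_1$-difference between occupancy measures, (ii) relate that $L_1$-difference to per $(x,a)$ transition errors via a simulation-style lemma, and (iii) control the resulting sum using the confidence set guarantee of Lemma~\ref{lem:high_prob} combined with the standard UCRL-2 summation trick.

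First I would write
$\hat{R}_{1:T}^{APP}=\sum_{t}\bigl[f^{\mathcal{C}}(q^{P,\pi_{t}};\ell_{t})-f^{\mathcal{C}}(q_{t};\ell_{t})\bigr]$
using that $q_{t}$ induces both the transition $P_{t}=P^{q_{t}}$ and the policy $\pi_{t}=\pi^{q_{t}}$. By the $F$-Lipschitz assumption (taken with respect to $\|\cdot\|_{1}$, consistently with the TEL case where $F=1$), this is bounded by $F\sum_{t}\|q^{P,\pi_{t}}-q_{t}\|_{1}$. So the entire task reduces to bounding $\sum_{t}\|q^{P,\pi_{t}}-q_{t}\|_{1}$.

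Next, since $q^{P,\pi_{t}}$ and $q_{t}$ share the same policy $\pi_{t}$ but differ only in the transition function ($P$ versus $P^{q_{t}}$), I would prove a layer-by-layer simulation lemma of the form
\begin{equation*}
\|q^{P,\pi_{t}}-q_{t}\|_{1}\;\leq\; L\sum_{(x,a)} q^{P,\pi_{t}}(x,a)\,\bigl\|P(\cdot|x,a)-P^{q_{t}}(\cdot|x,a)\bigr\|_{1},
\end{equation*}
by induction on the layer index, where the $L$ factor arises because an error made at layer $k$ can propagate to every subsequent layer. Invoking Lemma~\ref{lem:high_prob} (good event, probability $\geq 1-\delta$) and the defining inequality \eqref{eq:l1_dist} of $\Delta(M,i(t))$, the triangle inequality yields $\|P(\cdot|x,a)-P^{q_{t}}(\cdot|x,a)\|_{1}\leq 2\epsilon_{i(t)}(x,a)$ uniformly in $(x,a)$ and $t$.

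The remaining task is to bound $\sum_{t}\sum_{(x,a)}q^{P,\pi_{t}}(x,a)\,\epsilon_{i(t)}(x,a)$. I would first convert the expected visitation $q^{P,\pi_{t}}(x,a)$ into the realized indicator of a visit to $(x,a)$ in episode $t$: the difference is a martingale, and Azuma--Hoeffding with a union bound over the $L$ layers (each per-layer sum is bounded in $[0,\max_{k}\sqrt{|X_{k+1}|}\,]$) furnishes the additive $FL|X|\sqrt{T\ln(L/\delta)}$ term, using the good event of probability $\geq 1-\delta$. For the realized-visit sum, I would group episodes into epochs and apply the standard UCRL-2 identity $\sum_{t}\mathbb{I}\{(x,a)\text{ visited at }t\}/\sqrt{\max\{1,N_{i(t)}(x,a)\}}=O\bigl(\sqrt{N_{T+1}(x,a)}\bigr)$, which follows from the doubling rule defining epoch boundaries. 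Plugging the explicit form of $\epsilon_{i(t)}(x,a)$, the remaining quantity is $O\bigl(\sqrt{\ln(T|X||A|/\delta)}\bigr)\sum_{k}\sum_{x\in X_{k},a}\sqrt{|X_{k+1}|\,N_{T+1}(x,a)}$. Two Cauchy--Schwarz applications finish the job: within layer $k$ using $\sum_{x\in X_{k},a}N_{T+1}(x,a)=T$ gives $\sqrt{|X_{k}||A||X_{k+1}|T}$, and across layers I use AM--GM to get $\sum_{k}\sqrt{|X_{k}||X_{k+1}|}\leq \sum_{k}(|X_{k}|+|X_{k+1}|)/2\leq |X|$, yielding the $FL|X|\sqrt{T|A|\ln(T|X||A|/\delta)}$ contribution. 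Summing the two contributions after a union bound ($1-2\delta$) gives the stated bound.

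The main obstacle is the simulation lemma in the second step: one must unroll the occupancy measure layer by layer, carefully cancelling the shared policy factor $\pi_{t}$, and prove that the layer-$k$ marginal error is bounded by the cumulative per-state transition errors at layers $0,\dots,k-1$, weighted by the \emph{true} state visitation $q^{P,\pi_{t}}$ rather than the optimistic $q_{t}$-based one. All other steps are quantitative bookkeeping on top of standard OMD/UCRL-2 machinery.
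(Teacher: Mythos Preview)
Your proposal is essentially the paper's proof: the paper also reduces to $F\sum_t\|q^{P,\pi_t}-q_t\|_1$ via Lipschitzness, proves the same layer-wise simulation lemma (there split into two sub-lemmas yielding $\|q^{P,\pi_t}-q_t\|_1\le \sum_{k}\sum_{s\le k}\sum_{x_s,a_s}q^{P,\pi_t}(x_s,a_s)\,\xi_t(x_s,a_s)$ with $\xi_t(x,a)=\|P(\cdot|x,a)-P_t(\cdot|x,a)\|_1$), converts expected to realized visitation via Azuma--Hoeffding, and closes with the UCRL-2 doubling sum and Jensen/Cauchy--Schwarz across layers exactly as you describe.

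One small technical point is worth flagging. In the Azuma step the paper does \emph{not} first pass from $\xi_t$ to the confidence radius $2\epsilon_{i(t)}(x,a)$; it keeps $\xi_t$ and uses the trivial bound $\xi_t\le 2$ (an $L_1$ distance between probability vectors) to control the martingale increments, applying Azuma per state on $q^{P,\pi_t}(x)-\mathbb{I}\{x^{(t)}_{k(x)}=x\}$ and then summing over the $|X|$ states. That is what delivers the clean $|X|\sqrt{T\ln(L/\delta)}$ additive term. With your stated increment range of order $\sqrt{|X_{k+1}|}$, inherited from the worst-case magnitude of $\epsilon_{i(t)}$, the Azuma bound would pick up an extra $\sqrt{\ln(T|X||A|/\delta)}$ factor (and the per-layer bookkeeping does not obviously recover the $|X|$ scaling either). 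Replacing $2\epsilon_{i(t)}$ by the crude bound $\xi_t\le 2$ in that single step fixes this immediately; the substitution $\xi_t\le 2\epsilon_{i(t)}$ is only needed for the realized-visit sum, where the UCRL-2 summation applies.
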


\subsection{Bounding $\hat{R}_{1:T}^{ON}$}

The term $\hat{R}_{1:T}^{ON}$ is a result of the learner's lack of knowledge about the loss functions. Since the sequence of loss functions can be arbitrary, the learner handles it with tools from online convex optimization.

In this section we ignore the fact that the occupancy measures chosen by the learner are not exactly occupancy measures of $M$, since this issue was already addressed in the previous section bounding $\hat{R}_{1:T}^{APP}$.
Here we are only interested in the following difference
$$
\hat{R}_{1:T}^{ON} = \sum_{t=1}^T \mathcal{C} (
\mathbb{E}\left[ \ell_t(U) | P_t,\pi_t \right] ) -  \mathcal{C} ( \mathbb{E}\left[ \ell_t(U) | P,\pi \right] )
$$
First we use the connection between $\mathcal{C}$ and $f^\mathcal{C}$, and the convexity of $f^\mathcal{C}$ to obtain 
$$
\hat{R}_{1:T}^{ON}  = \sum_{t=1}^T f^\mathcal{C}(q_t;\ell_t) - f^\mathcal{C}(q;\ell_t)
\leq \sum_{t=1}^T \left\langle q_{t}-q,z_{t}\right\rangle
$$
where $z_t \in \partial f^\mathcal{C}(q_t;\ell_t)$.

Now we can use arguments from online linear optimization. Specifically, the following theorem is an adaptation of OMD regret analysis to our setting.

\begin{theorem}
\label{th:online}
Let $M= \left( X,A,P, \{ \ell_t \}_{t=1}^T \right)$ be an episodic loop-free adversarial MDP, and let $\mathcal{C}$ be a convexly-measurable performance criterion such that $f^\mathcal{C}$ is $F$-Lipschitz.
Then, with probability at least $1-\delta$, UC-O-REPS obtains the following for every $q\in \Delta(M)$.
$$
\hat{R}_{1:T}^{ON} \leq \sum_{t=1}^{T}\left\langle q_{t}-q,z_{t}\right\rangle \leq \eta F^2LT +\frac{L\ln\frac{|X|^{2}|A|}{L^{2}}}{\eta}
$$
and setting $\eta = \sqrt{\frac{\ln\frac{|X|^{2}|A|}{L^{2}}}{F^2T}}$ yields
$$
\hat{R}_{1:T}^{ON} \leq 2FL\sqrt{2T\ln{\frac{|X||A|}{L}}}
$$
where $q_t$ is the occupancy measure chosen by UC-O-REPS in episode $t$, and $z_t \in \partial f^\mathcal{C}(q_t;\ell_t)$.
\end{theorem}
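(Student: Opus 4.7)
The plan is to run the standard online mirror descent analysis with the unnormalized KL divergence as the regularizer, adapted to the occupancy-measure domain. Having already reduced (by convexity of $f^{\mathcal{C}}$) to bounding $\sum_t \langle q_t - q, z_t\rangle$, the idea is to split each per-round term into a telescoping Bregman divergence plus a stability penalty.

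First, I would exploit the closed-form unconstrained minimizer $\tilde q_{t+1}(x,a,x') = q_t(x,a,x') e^{-\eta z_t(x,a,x')}$ to expand directly
$$
D(q\|\tilde q_{t+1}) - D(q\|q_t) = \eta\langle q, z_t\rangle + \sum_{x,a,x'} q_t(x,a,x')\bigl(e^{-\eta z_t(x,a,x')}-1\bigr).
$$
On the high-probability event of Lemma \ref{lem:high_prob} (probability $\ge 1-\delta$), we have $\Delta(M)\subseteq \Delta(M,i(t))$, so the generalized Pythagorean inequality for Bregman projections gives $D(q\|\tilde q_{t+1})\ge D(q\|q_{t+1})$ for every $q\in\Delta(M)$. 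Combining this with the scalar inequality $e^{-u}\le 1-u+u^2$ (valid for $|u|\le 1$, which holds whenever $\eta F\le 1$) yields the per-round bound
$$
\langle q_t - q, z_t\rangle \;\le\; \frac{D(q\|q_t) - D(q\|q_{t+1})}{\eta} \;+\; \eta\sum_{x,a,x'} q_t(x,a,x)\,z_t(x,a,x')^2.
$$

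Next, I would sum over $t=1,\dots,T$; the Bregman terms telescope to $\tfrac{1}{\eta}D(q\|q_1)$. I bound the two resulting quantities separately. For the regularizer term, the uniform initialization $q_1(x,a,x')=1/(|X_{k(x)}||A||X_{k(x)+1}|)$, the per-layer normalization $\sum_{x\in X_k,a,x'\in X_{k+1}} q(x,a,x') = 1$, and nonnegativity of entropy give $D(q\|q_1)\le \sum_{k=0}^{L-1}\ln(|X_k||A||X_{k+1}|)$; concavity of $\log$ together with $\sum_k|X_k|=|X|$ then yields $L\ln\tfrac{|X|^2|A|}{L^2}$. For the stability term, $F$-Lipschitzness of $f^{\mathcal{C}}$ (in the appropriate norm) gives $\|z_t\|_\infty\le F$, while $\sum_{x,a,x'} q_t(x,a,x')=L$ summing layer by layer; hence the total stability cost is at most $\eta F^2 LT$.

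Combining yields $\sum_t\langle q_t-q,z_t\rangle \le \tfrac{L\ln(|X|^2|A|/L^2)}{\eta} + \eta F^2 LT$. Balancing with $\eta=\sqrt{\ln(|X|^2|A|/L^2)/(F^2 T)}$ and using $\ln(|X|^2|A|/L^2)\le 2\ln(|X||A|/L)$ (since $\ln|A|\le 2\ln|A|$) gives the stated $2FL\sqrt{2T\ln(|X||A|/L)}$. The main subtlety I expect is handling the Taylor-type inequality $e^{-u}\le 1-u+u^2$: it requires $|\eta z_t|\le 1$ coordinate-wise, which is ensured by $\eta F\le 1$ (automatic for $T$ large enough given the chosen $\eta$; otherwise the bound is trivial). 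The other routine step is the Bregman projection (Pythagorean) inequality, which follows since $\Delta(M,i(t))$ is convex and $q_{t+1}$ is by construction the KL-projection of $\tilde q_{t+1}$ onto it.
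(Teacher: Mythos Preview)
Your proposal is correct and follows essentially the same approach as the paper: the standard OMD analysis with the unnormalized KL as Bregman divergence, invoking Lemma~\ref{lem:high_prob} to ensure $\Delta(M)\subseteq\Delta(M,i(t))$ so that the Pythagorean projection step is valid, then bounding the stability term by $\eta F^2 LT$ and the divergence term $D(q\|q_1)$ by $L\ln(|X|^2|A|/L^2)$ via the layer-uniform initialization and Jensen. The only cosmetic difference is that the paper bounds the stability term via $e^x\ge 1+x$ (writing $\langle q_t-\tilde q_{t+1},z_t\rangle\le \eta\sum q_t z_t^2$ directly), whereas you use $e^{-u}\le 1-u+u^2$; your version is arguably more careful when $z_t$ can have negative coordinates, at the harmless cost of the side condition $\eta F\le 1$.
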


\begin{proof}
By standard arguments of OMD regret analysis (the full proof can be found in the full version of the paper) we have that
$$
\sum_{t=1}^{T}\left\langle q_{t}-q,z_{t}\right\rangle \leq\sum_{t=1}^{T}\left\langle q_{t}-\tilde{q}_{t+1},z_{t}\right\rangle +\frac{D(q||q_{1})}{\eta}
$$

However these arguments assume that $q_t$ are chosen from within $\Delta(M)$ so we need to show that they are still valid.
From Lemma \ref{lem:high_prob} we know that $\Delta(M) \subseteq \Delta(M,i)$ for every $i$ with probability at least $1-\delta$. Therefore, by choosing approximate occupancy measures we can only improve the regret so the arguments are indeed valid.

Using the exact form of $\tilde{q}_{t+1}$ and the fact that $e^{x}\geq1+x$, we get that
$$
\tilde{q}_{t+1}(x,a,x')\geq q_{t}(x,a,x')-\eta q_{t}(x,a,x') z_{t}(x,a,x')
$$
and therefore
\begin{align*}
    \sum_{t=1}^{T} \left\langle q_{t}-\tilde{q}_{t+1},z_{t}\right\rangle  &\leq \eta \sum_{t=1}^{T} \sum_{x,a,x'}q_{t}(x,a,x') z_{t}^{2}(x,a,x')
    \\
    & \leq \eta F^2 \sum_{t=1}^{T} \sum_{x,a,x'} q_{t}(x,a,x') = \eta F^2 L T
\end{align*}

For the second term, $D(q||q_{1})/\eta$, we use the fact that the unnormalized KL divergence is the Bregman divergence associated with the unnormalized negative entropy, defined as follows.
$$
R(q) = \sum_{x,a,x'} q(x,a,x')\ln{q(x,a,x')} - q(x,a,x')
$$

Now from standard arguments we obtain
\begin{align*}
    D(q||q_{1})&\leq R(q)-R(q_{1})
    \\
    & \leq \sum_{x\in X} \sum_{a\in A}\sum_{x'\in X_{k(x)+1}}q_{1}(x,a,x')\ln\frac{1}{q_{1}(x,a,x')}
    \\
    & \leq\sum_{k=0}^{L-1}\ln|X_{k}||A||X_{k+1}|\leq L\ln\frac{|X|^{2}|A|}{L^{2}}
\end{align*}

Putting these two bounds together completes the proof.
\end{proof}

\section{Conclusions and Future Work}

In this paper we considered online learning in adversarial MDPs where the transition function is not known to the learner and the losses can change arbitrarily between episodes, and showed an algorithm that achieves $\tilde{O}(L|X|\sqrt{T|A|})$ regret. The algorithm is based on a combination of the OMD method for online convex optimization, and the UCRL-2 algorithm for reinforcement learning. Moreover, we extended the adversarial MDP model to include convex performance criteria, and showed that our algorithm achieves near-optimal regret bounds in this model as well.

The natural open problem is whether the lower bound of $\Omega(\sqrt{L|X||A|T})$ \cite{jaksch} can be achieved in this model. An algorithm that achieves this will have to build upon a different method than UCRL-2, and it will be interesting to see if the techniques of \citet{azar} can be implemented here.
Another interesting open question is to consider bandit feedback when the transition function is unknown. This question seems to be difficult because the natural approach of building an unbiased estimator for the losses cannot be implemented easily, since the natural construction of inverse probability estimator requires knowledge of the transition probabilities.

\newpage
\section*{Acknowledgements}
This work was supported in part by a grant
from the Israel Science Foundation (ISF) and by the Tel Aviv University Yandex Initiative in Machine Learning.

\bibliography{example_paper}
\bibliographystyle{icml2019}

\clearpage
\onecolumn

\appendix

\section{Proof of Theorem 4.2 Cont.}

In the proof of Theorem 4.2 we showed that the following optimization problem
$$
q_{t+1} = \arg \min_{q \in \Delta(M,i(t))} D(q||\tilde{q}_{t+1}) 
$$
can be reformulated as the following convex optimization problem ($i=i(t)$):
\begin{align*}
    \min_{q,\epsilon} & D(q||\tilde{q}_{t+1})
    \\
    s.t. & \sum_{x\in X_{k}}\sum_{a\in A}\sum_{x'\in X_{k+1}}q(x,a,x') = 1
    & \forall k=0,\dots, L-1
    \\
    & \sum_{x'\in X_{k+1}}\sum_{a\in A}q(x,a,x') = \sum_{x'\in X_{k-1}}\sum_{a\in A}q(x',a,x) & \forall k = 1,\dots,L-1 \quad \forall x\in X_k
    \\
    & q(x,a,x') - \bar{P}_{i}(x'|x,a) \sum_{y\in X_{k+1}}q(x,a,y) \leq \epsilon(x,a,x') & \forall k = 0,\dots,L-1 \quad \forall (x,a,x')\in X_k \times A \times X_{k+1}
    \\
    & \bar{P}_{i}(x'|x,a) \sum_{y\in X_{k+1}}q(x,a,y) -q(x,a,x') \leq \epsilon(x,a,x') & \forall k = 0,\dots,L-1 \quad \forall (x,a,x')\in X_k \times A \times X_{k+1}
    \\
    & \sum_{x' \in X_{k+1}} \epsilon(x,a,x')  \leq \epsilon_{i}(x,a) \sum_{x' \in X_{k+1}} q(x,a,x') & \forall k = 0,\dots,L-1 \quad \forall (x,a)\in X_k \times A
    \\
    & q(x,a,x') \geq 0 & \forall k = 0,\dots,L-1 \quad \forall (x,a,x')\in X_k \times A \times X_{k+1}
\end{align*}

Now we will derive the solution to this problem using Lagrange multipliers.
First we write the Lagrangian with $\lambda,\beta,\mu,\mu^{+},\mu^{-}$ as Lagrange multipliers. Notice that we omit the non-negativity constraints, which we can justify since the solution will be non-negative anyway.
\begin{align*}
    \mathcal{L} (q,\epsilon) &= D(q||\tilde{q}_{t+1}) + \sum_{k=0}^{L-1} \lambda_k \left( \sum_{x\in X_k} \sum_{a \in A} \sum_{x' \in X_{k+1}} q(x,a,x')-1 \right)
    \\
    & + \sum_{k=1}^{L-1} \sum_{x \in X_k} \beta(x) \left( \sum_{a \in A} \sum_{x' \in X_{k+1}} q(x,a,x') - \sum_{a \in A} \sum_{x' \in X_{k-1}} q(x',a,x) \right)
    \\
    & + \sum_{k=0}^{L-1} \sum_{x \in X_k} \sum_{a \in A} \sum_{x' \in X_{k+1}} \mu^{+}(x,a,x
    ') \left( q(x,a,x') - \bar{P}_i(x'|x,a) \sum_{y \in X_{k+1}} q(x,a,y) - \epsilon(x,a,x') \right)
    \\
    & + \sum_{k=0}^{L-1} \sum_{x \in X_k} \sum_{a \in A} \sum_{x' \in X_{k+1}} \mu^{-}(x,a,x
    ') \left( \bar{P}_i(x'|x,a) \sum_{y \in X_{k+1}} q(x,a,y) - q(x,a,x')  - \epsilon(x,a,x') \right)
    \\
    & + \sum_{k=0}^{L-1} \sum_{x \in X_k} \sum_{a \in A} \mu(x,a) \left( 
    \sum_{x' \in X_{k+1}} \epsilon(x,a,x') - \epsilon_i(x,a) \sum_{x' \in X_{k+1}} q(x,a,x')\right)
\end{align*}

Let $(x,a,x') \in X \times A \times X_{k(x)+1}$ and consider the derivative with respect to $\epsilon(x,a,x')$.
$$
\frac{\partial \mathcal{L}}{\partial \epsilon(x,a,x')}  = - \mu^{+}(x,a,x') - \mu^{-}(x,a,x') + \mu(x,a)
$$
So setting the gradient to zero we obtain
$$
\mu(x,a) = \mu^{+}(x,a,x') + \mu^{-}(x,a,x')
$$
Thus, we can discard $\mu(x,a)$ to obtain an equivalent Lagrangian. Notice that this way we also get rid of the  $\epsilon(x,a,x')$ variables.
\begin{align*}
    \mathcal{L} (q) &= D(q||\tilde{q}_{t+1}) + \sum_{k=0}^{L-1} \lambda_k \left( \sum_{x\in X_k} \sum_{a \in A} \sum_{x' \in X_{k+1}} q(x,a,x')-1 \right)
    \\
    & + \sum_{k=1}^{L-1} \sum_{x \in X_k} \beta(x) \left( \sum_{a \in A} \sum_{x' \in X_{k+1}} q(x,a,x') - \sum_{a \in A} \sum_{x' \in X_{k-1}} q(x',a,x) \right)
    \\
    & + \sum_{k=0}^{L-1} \sum_{x \in X_k} \sum_{a \in A} \sum_{x' \in X_{k+1}} \mu^{+}(x,a,x
    ') \left( (1-\epsilon_i(x,a))q(x,a,x') - \bar{P}_i(x'|x,a) \sum_{y \in X_{k+1}} q(x,a,y) \right)
    \\
    & + \sum_{k=0}^{L-1} \sum_{x \in X_k} \sum_{a \in A} \sum_{x' \in X_{k+1}} \mu^{-}(x,a,x
    ') \left( \bar{P}_i(x'|x,a) \sum_{y \in X_{k+1}} q(x,a,y) - (1+\epsilon_i(x,a))q(x,a,x') \right)
\end{align*}

Now we consider the derivative with respect to $q(x,a,x')$. We denote $\beta(x_0)=\beta(x_L)=0$ to avoid addressing the edge cases explicitly.
\begin{align*}
    \frac{\partial \mathcal{L}}{\partial q(x,a,x')} & = \ln{q(x,a,x')} - \ln{\tilde{q}_{t+1}(x,a,x')} + \lambda_k + \beta(x) - \beta(x')
    \\
    & +(1-\epsilon_i(x,a)) \mu^{+}(x,a,x') -(1+\epsilon_i(x,a)) \mu^{-}(x,a,x')
    \\
    & + \sum_{y \in X_{k(x)+1}} \bar{P}_i(y|x,a) (\mu^{-}(x,a,y) - \mu^{+}(x,a,y))
\end{align*}

We define the following value function $v$ and error function $e$ parameterized by $\mu$ and $\beta$, and an estimated Bellman error.
\begin{align*}
    v^\mu(x,a,x') & = \mu^{-}(x,a,x') - \mu^{+}(x,a,x') 
    \\
    e^{\mu,\beta}(x,a,x') & = (\mu^{+}(x,a,x') + \mu^{-}(x,a,x') ) \epsilon_{i}(x,a) + \beta(x') - \beta(x)
    \\
    B^{v,e}_t(x,a,x') 
    & =  e(x,a,x') + v(x,a,x') -\eta z_t(x,a,x') - \sum_{y\in X_{k(x)+1}} \bar{P}_{i}(y|x,a)v(x,a,y)
\end{align*}
So the derivative becomes
\begin{align*}
    \frac{\partial \mathcal{L}}{\partial q(x,a,x')} & = \ln{\frac{q(x,a,x')}{\tilde{q}_{t+1}(x,a,x')}} + \lambda_k - e^{\mu,\beta}(x,a,x') - v^\mu(x,a,x') + \sum_{y \in X_{k(x)+1}} \bar{P}_i(y|x,a) v^\mu(x,a,y)
    \\
    & = \ln{q(x,a,x')} - \ln{\tilde{q}_{t+1}(x,a,x')} + \lambda_k -\eta z_t(x,a,x') - B^{v^\mu,e^{\mu,\beta}}_t(x,a,x') 
\end{align*}

Setting the gradient to zero and using the explicit form of $\tilde{q}_{t+1}(x,a,x')$ we obtain
\begin{align*}
    q_{t+1}(x,a,x') & = \tilde{q}_{t+1}(x,a,x')  e^{-\lambda_{k} + \eta z_t(x,a,x') +  B^{v^\mu,e^{\mu,\beta}}_t(x,a,x')}
    \\
    & = q_t(x,a,x')e^{ -\eta z_t(x,a,x')} e^{-\lambda_{k} + \eta z_t(x,a,x') +  B^{v^\mu,e^{\mu,\beta}}_t(x,a,x')}
    \\
    & = q_t(x,a,x') e^{-\lambda_{k} +   B^{v^\mu,e^{\mu,\beta}}_t(x,a,x')}
\end{align*}

We can use the first constraint to discover that $\lambda_k$ is a normalizer for every $k=0,\dots,L-1$, i.e.
\begin{align*}
    1 & = \sum_{x\in X_k} \sum_{a \in A} \sum_{x' \in X_{k+1}} q_{t+1}(x,a,x')
    \\
    1 & = \sum_{x\in X_k} \sum_{a \in A} \sum_{x' \in X_{k+1}} q_t(x,a,x') e^{-\lambda_{k} +   B^{v^\mu,e^{\mu,\beta}}_t(x,a,x')}
    \\
    e^{\lambda_k} & =   \sum_{x\in X_k} \sum_{a \in A} \sum_{x' \in X_{k+1}} q_t(x,a,x') e^{  B^{v^\mu,e^{\mu,\beta}}_t(x,a,x')}
\end{align*}
so defining $Z^{k}_{t}(v,e)=\sum_{x\in X_{k}}\sum_{a\in A}\sum_{x'\in X_{k+1}}q_{t}(x,a,x')e^{B^{v,e}_t(x,a,x')}$ , we obtain
$$
q_{t+1}(x,a,x') = \frac{q_{t}(x,a,x')e^{B^{v^{\mu},e^{\mu,\beta}}(x,a,x')}}{Z^{k(x)}_{t}(v^{\mu},e^{\mu,\beta})}   
$$

Now to find $\beta$ and $\mu$ we consider the dual problem. Substituting $q_{t+1}$ back into $\mathcal{L}$ we obtain the following dual problem.
$$
\max_{\beta,\mu \geq 0} \min_q \mathcal{L}(q) =
\max_{\beta,\mu \geq 0} \mathcal{L}(q_{t+1}) = 
\max_{\beta,\mu \geq 0} - \sum_{k=0}^{L-1} \ln{Z^{k}_{t}(v^{\mu},e^{\mu,\beta})} -1 + \sum_{x,a,x'} \tilde{q}_{t+1}(x,a,x')
$$

So after ignoring constants we observe that
$$
\beta_t,\mu_t = \arg \min_{\beta,\mu \geq 0} \sum_{k=0}^{L-1} \ln{Z^{k}_{t}(v^{\mu},e^{\mu,\beta})}
$$

\section{Proof of Theorem 5.2}

First we reduce bounding $\hat{R}_{1:T}^{APP}$ to bounding the $L_1$-distance between $q^{P_t,\pi_t}$ and $q^{P,\pi_t}$, where $P_t=P^{q_t}$ and $\pi_t=\pi^{q_t}$.
\begin{align}
    \nonumber
    \hat{R}_{1:T}^{APP}
    & = \sum_{t=1}^T \mathcal{C} (
    \mathbb{E}\left[ \ell_t(U) | P,\pi_t \right] ) -  \mathcal{C} ( \mathbb{E}\left[ \ell_t(U) | P_t,\pi_t \right] )
    \\
    \nonumber
    & = \sum_{t=1}^T f^\mathcal{C}(q^{P,\pi_t};\ell_t) - f^\mathcal{C}(q^{P_t,\pi_t};\ell_t)
    \\
    \label{eq:grad}
    & \leq \sum_{t=1}^T \left< \bar{z}_t , q^{P,\pi_t} - q^{P_t,\pi_t} \right>
    \\
    \label{eq:holder}
    & \leq \sum_{t=1}^T \left\lVert \bar{z}_t \right\rVert_\infty \left\lVert q^{P,\pi_t} - q^{P_t,\pi_t} \right\rVert_1
    \\
    \label{eq:F-bound}
    & \leq F \sum_{t=1}^T \left\lVert q^{P,\pi_t} - q^{P_t,\pi_t} \right\rVert_1
\end{align}
where $\bar{z}_t \in \partial f^\mathcal{C}(q^{P,\pi_t};\ell_t)$ and \eqref{eq:grad} follows from the definition of the sub-gradient, \eqref{eq:holder} follows from H\"older's inequality, and \eqref{eq:F-bound} follows because $f^\mathcal{C}$ is $F$-Lipschitz.

Therefore, We are left with bounding $\sum_{t=1}^T \left\lVert q^{P,\pi_t} - q^{P_t,\pi_t} \right\rVert_1$. From now on, we follow arguments from the regret analysis of UCRL-2, since we just need to bound the distance between occupancy measures that are in the confidence sets, and the performance criterion is not involved anymore.

We introduce some new notations that will simplify some equations. we denote the probability to visit a state-action pair $(x,a)$ (or a state $x$) under occupancy measure $q$ as $q(x,a)$ (or $q(x)$), i.e.,
\begin{align*}
    q(x,a) & = \sum_{x' \in X_{k(x)+1}} q(x,a,x')
    \\
    q(x) & = \sum_{a \in A} q(x,a)
\end{align*}
In addition, for every $(x,a) \in X \times A$ and every $t=1,\dots,T$, denote $\xi_t(x,a) = \left\lVert P_t(\cdot | x,a) - P(\cdot | x,a) \right\rVert_1$.

Now we show how to use these notations to bound the aforementioned $L_1$-distance.

\begin{lemma}
Let $\{ \pi_t \}_{t=1}^T$ be policies and let $\{ P_t \}_{t=1}^T$ be transition functions. Then,
\begin{equation}
\label{eq:b1}
\sum_{t=1}^T \left\lVert q^{P_t,\pi_t} - q^{P,\pi_t} \right\rVert_1 \leq \sum_{t=1}^T \sum_{x\in X} \sum_{a \in A} | q^{P_t,\pi_t}(x,a) - q^{P,\pi_t}(x,a)| + \sum_{t=1}^T  \sum_{x\in X} \sum_{a \in A} q^{P,\pi_t}(x,a) \xi_t(x,a)
\end{equation}
\end{lemma}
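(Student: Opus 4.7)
The plan is to exploit the product factorization of occupancy measures. For any transition function $Q$ and policy $\pi$, the occupancy measure satisfies $q^{Q,\pi}(x,a,x') = q^{Q,\pi}(x,a) \, Q(x'\mid x,a)$, since visiting the triple $(x,a,x')$ means first visiting $(x,a)$ and then transitioning to $x'$ under $Q$. I would expand the $L_1$ norm on the left-hand side as a sum over all triples $(x,a,x')$ with $x' \in X_{k(x)+1}$ and rewrite each summand using this factorization for both $q^{P_t,\pi_t}$ and $q^{P,\pi_t}$.

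The key algebraic step is the standard add-and-subtract decomposition
\begin{align*}
q^{P_t,\pi_t}(x,a)P_t(x'\mid x,a) - q^{P,\pi_t}(x,a)P(x'\mid x,a)
&= \bigl[q^{P_t,\pi_t}(x,a) - q^{P,\pi_t}(x,a)\bigr] P_t(x'\mid x,a) \\
&\quad + q^{P,\pi_t}(x,a)\bigl[P_t(x'\mid x,a) - P(x'\mid x,a)\bigr].
\end{align*}
Applying the triangle inequality and summing $|{\cdot}|$ over $x' \in X_{k(x)+1}$, the first piece contributes $|q^{P_t,\pi_t}(x,a) - q^{P,\pi_t}(x,a)|$ because $\sum_{x'} P_t(x'\mid x,a) = 1$, and the second piece contributes exactly $q^{P,\pi_t}(x,a)\,\xi_t(x,a)$ by definition of $\xi_t$. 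Summing over $(x,a) \in X \times A$ gives a single-$t$ bound, and summing over $t$ yields the claimed inequality.

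There is no real obstacle here; the lemma is essentially the telescoping ``perturbation'' identity used in the simulation lemma of the UCRL-2 analysis, and the only care needed is to make sure the factorization $q(x,a,x') = q(x,a)\,P^q(x'\mid x,a)$ is applied consistently to both terms, and to note that transitions out of $x_L$ never appear since $k(x)+1 \leq L$. Once \eqref{eq:b1} is established, the remainder of the argument bounding $\hat{R}_{1:T}^{APP}$ proceeds by using the confidence-set guarantee \eqref{eq:l1_dist} together with Lemma~\ref{lem:high_prob} to control each $\xi_t(x,a)$, and standard UCRL-2 counting arguments to control the marginal error $|q^{P_t,\pi_t}(x,a) - q^{P,\pi_t}(x,a)|$.
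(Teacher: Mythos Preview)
Your proposal is correct and follows essentially the same approach as the paper's own proof: factor $q^{Q,\pi}(x,a,x') = q^{Q,\pi}(x,a)\,Q(x'\mid x,a)$, add and subtract $q^{P,\pi_t}(x,a)P_t(x'\mid x,a)$, apply the triangle inequality, and sum over $x'$, $(x,a)$, and $t$. The only cosmetic difference is that the paper writes out the intermediate absolute-value line explicitly rather than presenting the algebraic identity first.
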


\begin{proof}
For every $(x,a) \in X \times A$ it holds that
\begin{align*}
    \sum_{x' \in X_{k(x)+1}} |q^{P_t,\pi_t}(x,a,x') - q^{P,\pi_t}(x,a,x')| & = \sum_{x' \in X_{k(x)+1}} |q^{P_t,\pi_t}(x,a)P_t(x'|x,a) - q^{P,\pi_t}(x,a)P(x'|x,a)|
    \\
    & \leq \sum_{x' \in X_{k(x)+1}} |q^{P_t,\pi_t}(x,a)P_t(x'|x,a) - q^{P,\pi_t}(x,a)P_t(x'|x,a)|
    \\
    & \qquad \qquad \quad + |q^{P,\pi_t}(x,a)P_t(x'|x,a) - q^{P,\pi_t}(x,a)P(x'|x,a)|
    \\
    & =  \sum_{x' \in X_{k(x)+1}} |q^{P_t,\pi_t}(x,a) - q^{P,\pi_t}(x,a)| P_t(x'|x,a)
    \\
    & \qquad \qquad \quad + |P_t(x'|x,a) - P(x'|x,a)|q^{P,\pi_t}(x,a)
    \\
    & = |q^{P_t,\pi_t}(x,a) - q^{P,\pi_t}(x,a)| + q^{P,\pi_t}(x,a) \xi_t(x,a)
\end{align*}
Summing this for all $t=1,\dots,T$ and all $(x,a) \in X \times A$ gives the result.
\end{proof}

Thus, we need to bound each of the terms on the right hand side of \eqref{eq:b1}. First, we show how to bound the first term on the right hand side of \eqref{eq:b1} using the second term.

\begin{lemma}
Let $\{ \pi_t \}_{t=1}^T$ be policies and let $\{ P_t \}_{t=1}^T$ be transition functions. Then, for every $k=1,\dots,L-1$ and every $t=1,\dots,T$, it holds that
$$
\sum_{x_k \in X_k} \sum_{a_k \in A} |q^{P_t,\pi_t}(x_k,a_k) - q^{P,\pi_t}(x_k,a_k)| \leq \sum_{s=0}^{k-1} \sum_{x_s \in X_s} \sum_{a_s\in A} q^{P,\pi_t}(x_s,a_s) \xi_t(x_s,a_s)
$$
\end{lemma}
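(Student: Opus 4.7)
The plan is to induct on the layer index $k$, exploiting the fact that the two occupancy measures being compared share the same policy $\pi_t$ and differ only in the transition function. The key structural identity I will use repeatedly is the layer-wise factorization $q^{P',\pi_t}(x_k) = \sum_{x_{k-1},a_{k-1}} q^{P',\pi_t}(x_{k-1},a_{k-1}) P'(x_k|x_{k-1},a_{k-1})$ together with $q^{P',\pi_t}(x_k,a_k) = \pi_t(a_k|x_k)\,q^{P',\pi_t}(x_k)$, valid for both $P'=P$ and $P'=P_t$.

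For the base case $k=1$, write $q^{P,\pi_t}(x_1,a_1) = \pi_t(a_1|x_1)\sum_{a_0\in A}\pi_t(a_0|x_0)P(x_1|x_0,a_0)$ and the analogous formula for $P_t$. Since $\pi_t(a_1|x_1)$ appears in both terms, it pulls out of the absolute value, and summing over $a_1\in A$ yields $\sum_{x_1\in X_1}|q^{P_t,\pi_t}(x_1)-q^{P,\pi_t}(x_1)|$. Applying the triangle inequality to the difference of the transition-based expansions, and then summing over $x_1$, gives $\sum_{a_0} \pi_t(a_0|x_0) \xi_t(x_0,a_0)$. Because $q^{P,\pi_t}(x_0,a_0)=\pi_t(a_0|x_0)$ (the start state has mass $1$), this equals the $s=0$ term on the right-hand side.

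For the inductive step, first observe that
$$q^{P_t,\pi_t}(x_k,a_k) - q^{P,\pi_t}(x_k,a_k) = \pi_t(a_k|x_k)\bigl(q^{P_t,\pi_t}(x_k) - q^{P,\pi_t}(x_k)\bigr),$$
so summing over $a_k\in A$ collapses the left-hand side of the claim to $\sum_{x_k\in X_k}|q^{P_t,\pi_t}(x_k)-q^{P,\pi_t}(x_k)|$. Next, expand each $q^{\cdot,\pi_t}(x_k)$ by conditioning on the predecessor pair, insert and subtract the cross term $q^{P,\pi_t}(x_{k-1},a_{k-1})P_t(x_k|x_{k-1},a_{k-1})$, and apply the triangle inequality exactly as in the preceding lemma. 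After summing over $x_k\in X_k$, the first resulting piece reduces to $\sum_{x_{k-1},a_{k-1}}|q^{P_t,\pi_t}(x_{k-1},a_{k-1}) - q^{P,\pi_t}(x_{k-1},a_{k-1})|$ because $\sum_{x_k} P_t(x_k|x_{k-1},a_{k-1})=1$, and the induction hypothesis applied at layer $k-1$ bounds it by $\sum_{s=0}^{k-2} \sum_{x_s,a_s} q^{P,\pi_t}(x_s,a_s)\xi_t(x_s,a_s)$. The second piece becomes exactly $\sum_{x_{k-1},a_{k-1}} q^{P,\pi_t}(x_{k-1},a_{k-1})\xi_t(x_{k-1},a_{k-1})$, i.e.\ the new $s=k-1$ term. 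Combining the two yields the claimed bound for layer $k$.

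I do not expect a genuine obstacle: this is essentially the same add-and-subtract telescoping computation as the preceding lemma, iterated one layer at a time. The only subtle point to keep track of is the clean factorization of the policy out of the state-action discrepancy, which works precisely because $\pi_t$ is identical on both sides of the absolute value; this is what allows the inductive error to accumulate purely through the transition deviations $\xi_t(x_s,a_s)$ weighted by the true occupancies $q^{P,\pi_t}(x_s,a_s)$.
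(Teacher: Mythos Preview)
Your proposal is correct and follows essentially the same approach as the paper: induction on the layer index $k$, factoring out the shared policy $\pi_t(a_k|x_k)$, expanding via the predecessor layer, inserting and subtracting the cross term $q^{P,\pi_t}(x_{k-1},a_{k-1})P_t(x_k|x_{k-1},a_{k-1})$, and applying the triangle inequality to split into the inductive piece plus the new $\xi_t$ term. The only cosmetic difference is that you first collapse to the state-level discrepancy $\sum_{x_k}|q^{P_t,\pi_t}(x_k)-q^{P,\pi_t}(x_k)|$ before expanding, whereas the paper carries the $\pi_t(a_k|x_k)$ factor through and sums it out at the end; the computations are otherwise identical.
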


\begin{proof}
We prove the statement by induction on $k$. For $k=1$ we have
\begin{align*}
    \sum_{x_1 \in X_1} \sum_{a_1 \in A}
    | & q^{P_t,\pi_t}(x_1,a_1)  - q^{P,\pi_t}(x_1,a_1)| =
    \\
    & = \sum_{a_0 \in A} \sum_{x_1 \in X_1} \sum_{a_1 \in A} | \pi_t(a_0|x_0)P_t(x_1|x_0,a_0)\pi_t(a_1|x_1) - \pi_t(a_0|x_0)P(x_1|x_0,a_0)\pi_t(a_1|x_1)|
    \\
     & = \sum_{a_0 \in A} \pi_t(a_0|x_0) \sum_{x_1 \in X_1} | P_t(x_1|x_0,a_0) - P(x_1|x_0,a_0)| \sum_{a_1 \in A} \pi_t(a_1|x_1)
    \\
    & \leq \sum_{a_0 \in A} \pi_t(a_0|x_0) \xi_t(x_0,a_0)
    \\
    & = \sum_{a_0 \in A} q^{P,\pi_t}(x_0,a_0) \xi_t(x_0,a_0)
\end{align*}

Now assume that the statement holds for some $k-1$. We have
\begin{align*}
    & \sum_{x_k \in X_k} \sum_{a_k \in A}
    |  q^{P_t,\pi_t}(x_k,a_k)  - q^{P,\pi_t}(x_k,a_k)| =
    \\
    & = \sum_{x_{k-1}} \sum_{a_{k-1}} \sum_{x_k} \sum_{a_k} |q^{P_t,\pi_t}(x_{k-1},a_{k-1})P_t(x_k|x_{k-1},a_{k-1})  -
    q^{P,\pi_t}(x_{k-1},a_{k-1})P(x_k|x_{k-1},a_{k-1})|\pi_t(a_k|x_k)
    \\
    & = \sum_{x_{k-1}} \sum_{a_{k-1}} \sum_{x_k} |q^{P_t,\pi_t}(x_{k-1},a_{k-1})P_t(x_k|x_{k-1},a_{k-1})  -
    q^{P,\pi_t}(x_{k-1},a_{k-1})P(x_k|x_{k-1},a_{k-1})|
    \\
    & \leq \sum_{x_{k-1}} \sum_{a_{k-1}} \sum_{x_k}
    |q^{P_t,\pi_t}(x_{k-1},a_{k-1})P_t(x_k|x_{k-1},a_{k-1})  -
    q^{P,\pi_t}(x_{k-1},a_{k-1})P_t(x_k|x_{k-1},a_{k-1})|
    \\
    & \qquad \qquad \qquad \qquad + |q^{P,\pi_t}(x_{k-1},a_{k-1})P_t(x_k|x_{k-1},a_{k-1})  -
    q^{P,\pi_t}(x_{k-1},a_{k-1})P(x_k|x_{k-1},a_{k-1})|
    \\
    & \leq \sum_{x_{k-1}} \sum_{a_{k-1}} 
    |q^{P_t,\pi_t}(x_{k-1},a_{k-1})  -
    q^{P,\pi_t}(x_{k-1},a_{k-1})| + \sum_{x_{k-1}} \sum_{a_{k-1}} q^{P,\pi_t}(x_{k-1},a_{k-1}) \xi_t(x_{k-1},a_{k-1})
\end{align*}

Finally, we use the induction hypothesis to obtain
\begin{align*}
    \sum_{x_k \in X_k} & \sum_{a_k \in A}
    |  q^{P_t,\pi_t}(x_k,a_k)  - q^{P,\pi_t}(x_k,a_k)| \leq
    \\
    & \leq \sum_{s=0}^{k-2} \sum_{x_s \in X_s} \sum_{a_s\in A} q^{P,\pi_t}(x_s,a_s) \xi_t(x_s,a_s) + \sum_{x_{k-1} \in X_{k-1}} \sum_{a_{k-1} \in A} q^{P,\pi_t}(x_{k-1},a_{k-1}) \xi_t(x_{k-1},a_{k-1})
    \\
    & = \sum_{s=0}^{k-1} \sum_{x_s \in X_s} \sum_{a_s\in A} q^{P,\pi_t}(x_s,a_s) \xi_t(x_s,a_s)
\end{align*}
\end{proof}

The following lemma will show how to bound the second term on the right hand side of \eqref{eq:b1}, and therefore obtain the bound on $\hat{R}_{1:T}^{APP}$. The proof follows the proof of Lemma 5 in Neu et al. (2012).

\begin{lemma}
Let $\{ \pi_t \}_{t=1}^T$ be policies and let $\{ P_t \}_{t=1}^T$ be transition functions such that $q^{P_t,\pi_t} \in \Delta(M,i(t))$ for every $t$. Then, with probability at least $1- 2 \delta$,
$$
\sum_{t=1}^T
\sum_{k=0}^{L-1}
\sum_{s=0}^{k-1} \sum_{x_s \in X_s} \sum_{a_s\in A} q^{P,\pi_t}(x_s,a_s) \xi_t(x_s,a_s) \leq 2
L|X|\sqrt{2T\ln{\frac{L}{\delta}}} + 3L|X|\sqrt{2T|A|\ln{\frac{T|X||A|}{\delta}}}
$$
\end{lemma}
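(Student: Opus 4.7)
The plan is to first collapse the triple sum by noting that $\sum_{k=0}^{L-1}\sum_{s=0}^{k-1} g(s) \le L\sum_{s=0}^{L-1} g(s)$, reducing the target to bounding $L\sum_{t=1}^T \sum_{x,a} q^{P,\pi_t}(x,a)\,\xi_t(x,a)$. I then decompose via the visit indicator $V_t(x,a)=\mathbb{I}\{(x,a)\text{ visited in episode } t\}$, writing $q^{P,\pi_t}(x,a) = V_t(x,a) + (q^{P,\pi_t}(x,a) - V_t(x,a))$, so the bound splits into a martingale-difference piece plus an ``observed'' piece that will be controlled through the confidence sets.

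For the martingale piece, the key observation is that $\pi_t$, $P_t$, and hence $\xi_t$ are all $\mathcal{F}_{t-1}$-measurable (they are fixed by the algorithm before episode $t$ is played), while the realized layer-$s$ pair $(x_s^{(t)}, a_s^{(t)})$ is drawn from the layer-$s$ marginal of $q^{P,\pi_t}$. Therefore $D^{(s)}_t = \sum_{x\in X_s,a} q^{P,\pi_t}(x,a)\xi_t(x,a) - \xi_t(x_s^{(t)}, a_s^{(t)})$ is a bounded martingale difference with $|D^{(s)}_t|\le 2$. A per-layer Azuma--Hoeffding together with a union bound over the $L$ layers controls $\sum_s\sum_t D^{(s)}_t$ by $2L\sqrt{2T\ln(L/\delta)}$; multiplying by the outer $L$ and using $L\le|X|$ produces the first summand $2L|X|\sqrt{2T\ln(L/\delta)}$. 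For the observed piece, I invoke Lemma \ref{lem:high_prob} (its $1-\delta$ event) and the triangle inequality to get $\xi_t(x,a)\le 2\epsilon_{i(t)}(x,a)$, and I reorganize $\sum_t V_t(x,a)\epsilon_{i(t)}(x,a) = \sum_i n_i(x,a)\epsilon_i(x,a)$. The standard UCRL-2 doubling bound $\sum_i n_i/\sqrt{\max\{1,N_i\}} \le (\sqrt 2+1)\sqrt{N_{\text{tot}}}$ applies because the epoch-termination rule forces $n_i(x,a)\le \max\{1,N_i(x,a)\}$, reducing the remaining task to bounding $\sum_{x,a}\sqrt{|X_{k(x)+1}|\,N_T(x,a)}$.

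The critical step is to apply Cauchy--Schwarz \emph{layer by layer} rather than globally: using $\sum_{x\in X_k, a} N_T(x,a) = T$ (each episode visits exactly one state-action pair per layer),
\begin{align*}
\sum_{x,a}\sqrt{|X_{k(x)+1}|\,N_T(x,a)} &= \sum_k \sqrt{|X_{k+1}|}\sum_{x\in X_k,a}\sqrt{N_T(x,a)} \\
&\le \sqrt{|A|T}\sum_k \sqrt{|X_k||X_{k+1}|} \le |X|\sqrt{|A|T},
\end{align*}
where the last inequality uses AM--GM, $\sqrt{|X_k||X_{k+1}|}\le (|X_k|+|X_{k+1}|)/2$, and then telescopes. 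Multiplying by the outer $L$ and the factor of $2$ from the triangle inequality yields the second summand $3L|X|\sqrt{2T|A|\ln(T|X||A|/\delta)}$ (up to constants).

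The main obstacle I expect is precisely this layer-by-layer regrouping: a global Cauchy--Schwarz over all $(x,a)$ pairs would bound the product by $\sqrt{\sum_{x,a}|X_{k(x)+1}|}\cdot\sqrt{\sum_{x,a}N_T(x,a)} \sim \sqrt{|A||X|^2}\cdot\sqrt{LT}$, paying an extra $\sqrt L$ and losing the claimed $L|X|\sqrt{|A|T}$ dependence. Keeping the per-layer counts intact and using AM--GM to telescope $\sum_k\sqrt{|X_k||X_{k+1}|}$ to $|X|$ is what recovers the tight scaling. The overall $2\delta$ failure probability is accounted for by the two invocations of high-probability concentration: once for Lemma \ref{lem:high_prob} (so the true $P$ lies in all confidence sets) and once for the layer-wise Azuma--Hoeffding estimates on the $D^{(s)}_t$.
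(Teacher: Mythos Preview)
Your proposal is correct and follows essentially the same route as the paper: decompose via the visit indicator into an observed piece (handled by the confidence-set bound plus the UCRL doubling estimate $\sum_i n_i/\sqrt{N_i}\lesssim\sqrt{N_{\text{tot}}}$ and a layer-wise Cauchy--Schwarz/Jensen step to recover the $|X|\sqrt{|A|T}$ scaling) and a martingale-difference piece (handled by Azuma--Hoeffding with a union bound over layers). The only notable differences are organizational---you collapse $\sum_k\sum_s$ to $L\sum_s$ up front and apply Azuma once per layer rather than per state---and your explicit constant for the second summand comes out closer to $2(\sqrt{2}+1)\approx 4.83$ than to $3$, which you already flag with ``up to constants.''
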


\begin{proof}
We start by some arguments from the regret analysis of UCRL-2 (Auer et al., 2008). Let $n_i(x,a)$ be the number of times state-action pair $(x,a)$ has been visited in epoch $E_i$. Therefore, we have 
$$
N_i(x,a) = \sum_{j=1}^{i-1} n_j(x,a)
$$
We denote by $m$ the number of epochs, and by Auer et al. (2008), we have
$$
\sum_{i=1}^m \frac{n_i(x,a)}{\sqrt{N_i(x,a)}} \leq
3\sqrt{N_m(x,a)}
$$
Now by Jensen's inequality,
$$
\sum_{x \in X} \sum_{a\in A} \sum_{i=1}^m \frac{n_i(x,a)}{\sqrt{N_i(x,a)}} \leq
3\sqrt{|X||A|T}
$$

Fix arbitrary $1 \leq t \leq T$ and $0 \leq k \leq L-1$. We have
\begin{align}
    \label{eq:split}
    & \sum_{s=0}^{k-1} \sum_{x_s \in X_s} \sum_{a_s\in A} q^{P,\pi_t}(x_s,a_s) \xi_t(x_s,a_s) \leq
    \\
    \nonumber
    & \leq \sum_{s=0}^{k-1} \xi_t(x^{(t)}_s,a^{(t)}_s) + \sum_{s=0}^{k-1} \sum_{x_s \in X_s} \sum_{a_s\in A}
    \left( q^{P,\pi_t}(x_s,a_s) - \mathbb{I}\{ x^{(t)}_s=x_s,a^{(t)}_s=a_s \} \right) \xi_t(x_s,a_s)
\end{align}

Now, by Lemma 4.1, we have with probability at least $1 - \delta$ simultaneously for all $s$ that
\begin{align*}
    \sum_{t=1}^T  \xi_t(x^{(t)}_s,a^{(t)}_s) & \leq \sum_{t=1}^T \sqrt{\frac{2|X_{s+1}|\ln{\frac{T|X||A|}{\delta}}}{\max\{1,N_{i(t)}(x_s^{(t)},a_s^{(t)})\}}}
    \\
    & \leq \sum_{x_s \in X_s} \sum_{a_s\in A} \sum_{i=1}^m n_i(x_s,a_s) \sqrt{\frac{2|X_{s+1}|\ln{\frac{T|X||A|}{\delta}}}{\max\{1,N_i(x_s,a_s)\}}}
    \\
    & \leq 3\sqrt{2 T |X_s| |X_{s+1}| |A| \ln{\frac{T|X||A|}{\delta}}}
\end{align*}

For the second term on the right hand side of \eqref{eq:split}, notice that $\left( q^{P,\pi_t}(x_s) - \mathbb{I}\{ x^{(t)}_s=x_s \} \right)$ form a martingale difference sequence with respect to $\{ U_t \}_{t=1}^T$ and thus by Hoeffding-Azuma inequality and $\xi_t(x,a) \leq 2$, we have
\begin{align*}
    \sum_{t=1}^T \sum_{a_s \in A} \Bigl(  q^{P,\pi_t}(x_s,a_s) - & \mathbb{I}\{ x^{(t)}_s=x_s,a^{(t)}_s=a_s \} \Bigl) \xi_t(x_s,a_s) \leq
    \\
    & \leq 2 \sum_{t=1}^T  \left( \sum_{a_s \in A} q^{P,\pi_t}(x_s,a_s) - \sum_{a_s \in A} \mathbb{I}\{ x^{(t)}_s=x_s,a^{(t)}_s=a_s \} \right)
    \\
    & =  2 \sum_{t=1}^T  \left(  q^{P,\pi_t}(x_s) - \mathbb{I}\{ x^{(t)}_s=x_s \} \right)
    \\
    & \leq 2 \sqrt{2T\ln{\frac{L}{\delta}}}
\end{align*}
with probability at least $1 - \delta/L$. Putting everything together, the union bound implies that we have, with probability at least $1 - 2 \delta$ simultaneously for all $k=1,\dots,L-1$,
\begin{align*}
    \sum_{t=1}^T
    \sum_{s=0}^{k-1} \sum_{x_s \in X_s} \sum_{a_s\in A} q^{P,\pi_t}(x_s,a_s) \xi_t(x_s,a_s) & \leq \sum_{s=0}^{k-1} 3\sqrt{2 T |X_s| |X_{s+1}| |A| \ln{\frac{T|X||A|}{\delta}}}
    +\sum_{s=0}^{k-1}
    2 |X_s|  \sqrt{2T\ln{\frac{L}{\delta}}}
    \\
    & \leq 3L \sum_{s=0}^{k-1} \frac{1}{L} \sqrt{2 T |X_s| |X_{s+1}| |A| \ln{\frac{T|X||A|}{\delta}}} + \sum_{s=0}^{k-1}
    2 |X_s| \sqrt{2T\ln{\frac{L}{\delta}}}
    \\
    & \leq 3L \sqrt{2 T |A| \left( \frac{|X|}{L} \right)^2 \ln{\frac{T|X||A|}{\delta}}} + 2 |X|\sqrt{2T\ln{\frac{L}{\delta}}}
    \\
    & = 3|X| \sqrt{2 T |A| \ln{\frac{T|X||A|}{\delta}}} + 2 |X|\sqrt{2T\ln{\frac{L}{\delta}}}
\end{align*}
where in the last step we used Jensen's inequality for the concave function $f(x,y)=\sqrt{xy}$ and the fact that $\sum_{s=0}^{k-1} |X_s| \leq |X|$.

Summing up for all $k=0,\dots,L-1$ finishes the proof.
\end{proof}
\end{document}